\newtheorem{theorem}{Theorem}[section]
\newtheorem{corollary}{Corollary}[theorem]
\newtheorem{lemma}[theorem]{Lemma}
\newtheorem{definition}{Definition}[section]
\setlist{parsep=0pt,listparindent=\parindent}
\newcommand\footnoteref[1]{\protected@xdef\@thefnmark{\ref{#1}}\@footnotemark}
\begin{document}

\title{SafePredict: 
A Meta-Algorithm for Machine Learning That Uses Refusals to Guarantee Correctness }

\author{Mustafa~A.~Kocak,~\IEEEmembership{Student Member,~IEEE,}
        David~{R}amirez,~\IEEEmembership{Member,~IEEE,}
        Elza~Erkip,~\IEEEmembership{Fellow,~IEEE,}
        and~Dennis~E.~Shasha,~\IEEEmembership{Fellow,~ACM}
\IEEEcompsocitemizethanks{\IEEEcompsocthanksitem M. A. Kocak, D. Ramirez and E. Erkip are with the Department
of Electrical and Computer Engineering, NYU Tandon School of Engineering, Brooklyn,
NY, 11201.\protect\\
E-mail: \{kocak, dar550, elza\}@nyu.edu
\IEEEcompsocthanksitem D. E. Shasha is with Courant Institute of Mathematical Sciences
New York University, New York, NY, 10012. \protect \\ E-mail: shasha@courant.nyu.edu}
}

\IEEEtitleabstractindextext{%
\begin{abstract}
\emph{SafePredict} is a novel meta-algorithm that works with any base prediction algorithm for online data to guarantee an arbitrarily chosen correctness rate, $1-\epsilon$, by allowing refusals. Allowing refusals means that the meta-algorithm may refuse to emit a prediction produced by the base algorithm on occasion so that the error rate on non-refused predictions does not exceed $\epsilon$. The SafePredict error bound does not rely on any assumptions on the data distribution or the base predictor. 
When the base predictor happens not to exceed the target error rate $\epsilon$,  SafePredict refuses only a finite number of times. When the error rate of the base predictor changes through time  SafePredict  makes use of a weight-shifting heuristic that adapts to these changes without knowing when the changes occur yet still maintains the correctness guarantee. Empirical results show that (i) SafePredict  compares favorably with state-of-the art confidence based refusal mechanisms which fail to offer robust error guarantees; and (ii) combining SafePredict with such refusal mechanisms can in many cases further reduce the number of refusals. Our software (currently in Python) is included in the supplementary material.
\end{abstract}
}

\maketitle

\IEEEdisplaynontitleabstractindextext
\IEEEpeerreviewmaketitle

\ifCLASSOPTIONcompsoc

\IEEEraisesectionheading{\section{Introduction}\label{sec:introduction}}
\else
\section{Introduction}
\label{sec:introduction}
\fi

Machine learning and statistical inference are the primary building blocks for systems that predict the future from the past. Prediction algorithms have been tailored to fit various applications as varied as analytics \cite{siegel2013predictive}, health care \cite{dua2014machine,han2015building},
and judicial decision making \cite{harbert2013law,mit_tech}. One of the major concerns when utilizing prediction algorithms to automate risk-critical applications is reliability. 

To guarantee an error rate for an overall prediction system, a meta-algorithm should refuse to make a prediction when the meta-algorithm infers that the base prediction algorithm is likely enough to be in error. The implications of refusing to make a prediction may vary according to the application of interest. For example, in a medical diagnosis system, refusing to make a prediction may result in the collection of more information about the patient or a request to a human expert to make a decision based on a more thorough evaluation.

Inspired by the prediction with expert advice framework \cite{cesa2006prediction}, we propose \textit{SafePredict}, an online meta-algorithm that accepts or refuses predictions of a base algorithm depending on the previous performance of the base algorithm. SafePredict asymptotically bounds the error to the desired level without any assumption on the data or the base predictor. When the error rate of the base predictor varies over time, SafePredict will adapt to those changes while preserving the error guarantee. 

\subsection{Prior Work}

The idea of allowing meta-algorithms to refuse to make predictions to reduce the error rate was first introduced by Chow \cite{chow1970optimum}. Chow mainly focused on the classification problem, where data points have an object-label pair independently sampled from a fixed and known probability distribution. Chow showed that the optimum error-refuse trade-off is achieved by a classifier that refuses to predict when the posterior probability of the estimated label given the object is less than a certain threshold. 

The major challenge in applying Chow's work is the need to know the data distribution, which is rarely available. The mainstream assumption is that the available data points are independently sampled from a fixed but unknown distribution. Given these data points, the most common approach is to estimate the underlying distribution and to use it in the  Chow framework. Estimating probability distributions  in high dimensional and complex datasets can be harder even than classification itself, see e.g. \cite{friedman1997bias}. An alternative approach uses confidence-based refusals. One can train a classification algorithm with an arbitrary confidence score (e.g. distance to the decision boundary) and refuse to make a prediction for points with low confidence scores. Examples of this line of work are \cite{hellman1970nearest,landgrebe2006interaction, de2000reject, li2006confidence}. 

Although many practical applications of the refusal framework exist in the literature, e.g. \cite{scheirer2014probability, golfarelli1997error,fumera2003classification,campi2010classification}, theoretical analyses of the suggested methods are relatively rare. {Some} notable exceptions are found in Wegkamp et al. \cite{bartlett2008classification,herbei2006classification,yuan2010classification}, El-Yaniv and Wiener \cite{el2010foundations,wiener2011agnostic} and Cortes et al. \cite{cortes2016learning} which approach the problem from a statistical learning theory perspective and suggest minimizing a linear combination of error and refuse probabilities. Alternatively, reliable agnostic learning, proposed by Kalai et al. \cite{kalai2012reliable}, posits an error threshold and searches for the least refusing predictor from a family of predictors that bounds the error to the error threshold, assuming such a predictor exists.

The related work discussed so far assumes a batch setup, i.e. the algorithm is trained on a fixed set of data points which are independently and identically distributed (i.i.d.). For a more comprehensive literature review of refusal algorithms in the batch setup, please see \cite{herbei2006classification,wiener2013theoretical,cortes2016learning,zhang2017reject} and the references therein. 

In contrast to the batch setup, a meta-algorithm in the online learning framework observes the true outcome after a prediction and then modifies its future behavior. 
In the conformal prediction framework of Vovk et al. \cite{vovk2005algorithmic}, a base algorithm generates confidence scores for each data point. Then the conformal predictor decides to predict or refuse based on these scores. A bound on the error probability and the independence between the error events are guaranteed under the assumptions that the data points are chosen from an exchangeable (essentially i.i.d.) distribution and the base predictor is invariant to the order of the observed data points. For recent developments regarding the classification with refusal problem in the conformal prediction framework we refer the reader to \cite{denis2015consistency,kocak2016conjugate ,lei2014classification} and Chapter 3 of \cite{vovk2005algorithmic}.
Unfortunately, in an online setting the probability of making errors on consecutive predictions may be correlated or the data sequence may have a non-stationary or even an adversarial distribution. Thus, any guarantees that might hold in the independent and identically distributed setting do not directly carry over.

Another approach in the online setting starts with the KWIK (knows what it knows) framework \cite{li2008knows} which removes all assumptions about the data distributions. Instead, KWIK assumes the existence of a perfect predictor (i.e. predictor is always correct) among a set of predictors and aims to find this perfect predictor with a minimum number of refusals. Sayedi et al. \cite{sayedi2010trading} extend this framework by allowing a fixed error budget $k$ and characterizing the minimum number of refusals by keeping the number of errors below $k$. Finally, Zhang et al. \cite{zhang2016extended} relax the perfect predictor assumption to an $l$-bias assumption (i.e. predictor makes at most $l$ errors) and allow the algorithm to refuse in order to achieve an optimal error-refuse trade-off. 

\subsection{Contributions}

This paper makes the following main contributions:
\begin{compactenum}
\item The SafePredict online meta-algorithm can work with any base prediction algorithm to provide an asymptotic error guarantee on the non-refused predictions, without making any assumption about the data or
the base algorithm.
\item All the variants of SafePredict meta-algorithm refuse at most a finite number of times when the base predictor's error rate is below the target error rate $\epsilon$.
\item Adaptive SafePredict meta-algorithm uses weight-shifting and other conventional adaptive procedures to track the error rate of the base predictor in changing environments, thus reducing the number of refusals while preserving the error guarantee.
\item Experiments show that the above theoretical guarantees are achieved in practice and translate to better error performance than other refusal algorithms. The experiments also show that combining SafePredict with previous meta-algorithms can lead to yet fewer refusals in many cases. 
\end{compactenum}

The rest of the paper is organized as follows. Section $2$ presents the problem  and provides a brief introduction to the exponentially weighted average forecasting (EWAF)  \cite{littlestone1989weighted,vavock1990aggregating} expert advice framework.  Section $3$ introduces SafePredict by recasting EWAF as a randomized refusal meta-algorithm and proves its theoretical properties. Section $4$ presents Adaptive SafePredict, a weight-shifting heuristic, to track changes in the error rate of the base algorithm and therefore reduce the number of refusals. Section $5$ presents experiments on real and synthetic data. Section $6$ concludes our work.

\subsection{Notation}

A summary of the notation introduced throughout the paper is given in Table 1. For each quantity, we provide the notation, a brief description and a reference to the section it is  first introduced. 

\begin{table}[!htbp]\caption{Summary of the Notation \\ 
}
\centering 
\begin{tabular}{r p{5cm} c }
\toprule
Not. & Description & Def. in Sec. \\ 
\toprule
$\alpha_t, \beta_t$  & Adaptivity parameters, $\alpha_t \leq w_{P,t+1} \leq \beta_t$. & 4 \\  
$\epsilon$ &   Target error rate. & 2 \\  
$\eta$ &  Learning rate, $\eta > 0$. & 2.1\\ 
$\rho_T$ &   Efficiency of the meta algorithm, $\nicefrac{T^*}{T}$. & 2 \\
$D$ &   Dummy predictor, always refuses, $\hat{y}_{D,t} = \varnothing$ and $l_{D,t} = \epsilon$ for all $t$. & 3\\  
$l_t$ &   Expected loss at time $t$. & 2.1\\    
$L_T$ &  Cumulative expected loss, $\sum_{t=1}^T l_t$. & 2.1 \\ 
$l_{P,t}$ &   Loss of $P$ at time $t$, assume $l_{P,t} \in [0,1]$. & 2\\  
$L_{P,T,t_0}$ &  Partial cumulative loss of $P$ from $t_0$ to $T$, $\sum_{t=t_0+1}^T l_{P,t}$. The third index drops if $t_0=0$&  2 \\     
$L_{P,T}^*$ &   Expected cumulative loss for the meta-algorithm $\sum_{t=1}^Tw_{P,t}l_{P,t}$.  & 2 \\ 
$P$ &  Base predictor. & 2 \\   
$P_i$ &  $i^{th}$ expert in the ensemble, has weight $w_{P_i,t}$ and loss $l_{P_i,t}$.& 2.1 \\ 
$T$ &  Time horizon. & 2\\ 
$T^*$ & Expected value of the number of (non-refused) predictions, $\sum_{t=1}^Tw_{P,t}$.  & 2 \\  
$V^*$ & Variance of the number of (non-refused) predictions, $\sum_{t=1}^Tw_{P,t}(1-w_{P,t})$.  & 2 \\  
$w_{D,t}$ & Weight of the dummy, also the probability of refusal, $1-w_{P,t}$. & 3\\  
$w_{P,t}$ & Probability of making a prediction at time $t$. &  2 \\ 
$\hat{y}_{P,t}$  & Prediction of $P$ at time $t$. & 2 \\  
$\hat{y}_t$  & Filtered prediction, $\hat{y}_t = \hat{y}_{P,t}$ (predict) or  $\hat{y}_t = \varnothing$ (refuse) .& 2 \\  
\vspace{0pt} \\
\bottomrule
\end{tabular}
\label{tab:TableOfNotationForMyResearch}
\end{table}

\section{Problem Setup and Background}

This section introduces the mathematical formulation of online prediction problem with a refusing meta-algorithm and the prediction with expert advice framework.

\subsection{Problem Formulation}

We assume access to a base predictor that produces a label prediction on an observed object. We denote the base predictor as $P$ and a sequence of (object, label) pairs by $(x_1,y_1), (x_2,y_2),\ldots, (x_T,y_T)$ where $T$ is an arbitrary horizon. At each time $t \in \{1,\ldots,T\}$ the base predictor does the following
\begin{compactenum} 
\item Observes the object $x_t$.
\item Predicts the corresponding label $\hat{y}_{P,t}$.
\item Observes the true label $y_t$, and suffers the loss $l_{P,t}$.
\end{compactenum}
In our formulation, we stay agnostic to the data sequence $(x_1,y_1), (x_2,y_2),\ldots, (x_T,y_T)$ and inner workings of the base predictor $P$. We  assume \emph{only} that the loss values are scaled to the unit interval, i.e. $0 \leq l_{P,t} \leq 1~ \forall~t$. 

For example, in a traditional classification task, the labels $y_t$ are  drawn from a finite set and 0-1 loss is used for the loss function, i.e. {$l_{P,t} = 0$ if $\hat{y}_{P,t} = y_t$ and $l_{P,t} = 1$ if $\hat{y}_{P,t} \neq y_t$}. 

Once the base predictor $P$ is chosen, our goal is to design a meta-algorithm $M$ which decides to either follow the prediction made by $P$ \emph{or} refuse to make a prediction for each data point. 
We characterize this meta-algorithm by the following:
\begin{compactitem}
\item \textbf{Parameter}: Target error/loss rate, $\epsilon \in  (0,1)$, is the average loss over time we can tolerate.
\item \textbf{Input}: In full generality, the input of $M$ at time $t$ consists of $x_i, \hat{y}_{P,i}~\forall~i \in \{1,...,t\},$ and $y_j, l_{P,j}~\forall~j \in \{1,...,t-1\}$. Note these are all the observed quantities \emph{before} the label $y_t$ is revealed.
\item \textbf{Output}: A randomized decision to predict (or refuse) at time $t$. We represent the output of $M$ as a probability value $w_{P,t} \in [0,1]$ that is used to decide the final prediction $\hat{y}_t$ as follows:
\[\hat{y}_{t} =   \left\{
\begin{array}{ll}
      \hat{y}_{P,t} & ~~~\textrm{with prob. }w_{P,t} \\
      \varnothing & ~~~\textrm{with prob. }1-w_{P,t} \\
\end{array} 
\right. ,\]
where $\varnothing$ denotes a refusal.
\end{compactitem}
A pictorial description of this general framework is presented in Figure \ref{fig_1}.

We note that $M$ makes a randomized decision at each time point and therefore the number of (non-refused) predictions is a random variable. We compute the expected value $T^*$ of this quantity as \[T^* = \sum_{t=1}^T w_{P,t}.\]
Because we ascribe no loss from refusing to predict, therefore we define the expected cumulative loss of $M$ as
\[L_{P,T}^* = \sum_{t=1}^T w_{P,t}l_{P,t}.\]
Finally, we define the error rate for this randomized meta algorithm by normalizing the cumulative expected loss via the expected number of non-refused predictions, i.e. $  {L_{P,T}^*}/{T^*}$.

Our top priority is to guarantee that the error rate of non-refused predictions made by the meta-algorithm does not exceed the target error rate $\epsilon$ as the number of predictions increases. Following nomenclature introduced by Tukey \cite{tukey1986sunset}, see also \cite{vovk2005algorithmic},	 we call this property  the \textit{validity} of the algorithm. Our goal is to satisfy validity without making any assumptions on the data. An asymptotic definition for the validity is given below.

\begin{definition}
A meta-algorithm $M$ with a given target error $\epsilon$ is called \emph{valid} if 
\[\limsup_{T^*\rightarrow \infty}\frac{L_{P,T}^* }{T^*}  \leq \epsilon . \]
\end{definition}

Next, among valid algorithms we interpret the \textit{efficiency} of an algorithm as the fraction of the predicted data points and define it as follows.

\begin{definition}
\textit{Efficiency} of a meta-algorithm $M$ is denoted by $\rho_T = T^*/T$ and $M$ is called \emph{efficient} if 
\[\lim\inf_{T\rightarrow \infty} \rho_T = 1. \]
\end{definition}

Though the discussion until now has defined validity and efficiency only asymptotically,  we will derive explicit bounds on the excess error rate, i.e. $L_{P,T}^*/ T^* - \epsilon$. However, any non-trivial formal statement about the efficiency of $M$ has to depend on the performance of the base predictor $P$. To keep this dependence minimal, we analyze the efficiency asymptotically for theoretical purposes for all predictors, though experiments show that the efficiency is high for reasonably good predictors. 

\begin{figure}[!htbp] 
\centering
\includegraphics[width=0.85\linewidth]{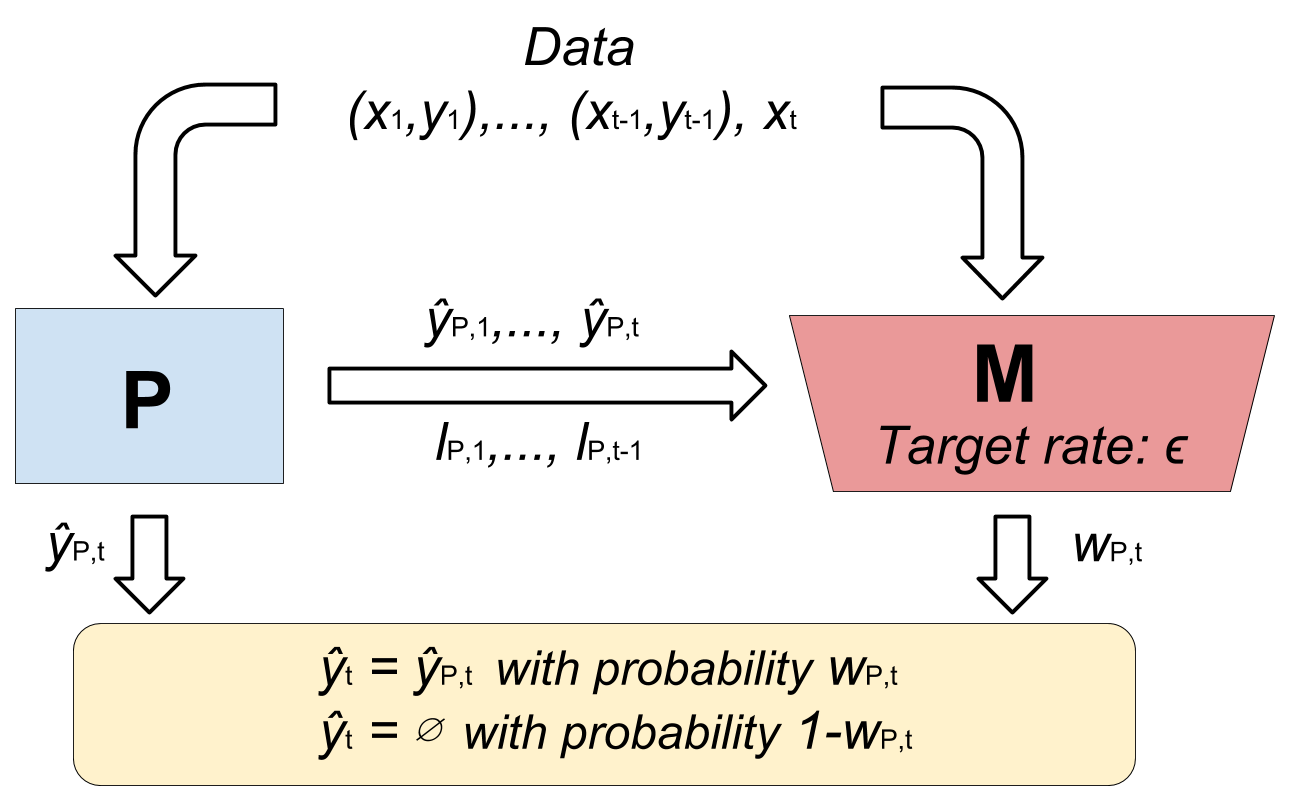}
\caption{\textit{The meta-algorithm, represented by $M$, makes a prediction equivalent to the recommendation of the base  predictor $P$ or refuses to do so for data point $t$ while guaranteeing a target rate $\epsilon$. 
}}
\label{fig_1}
\end{figure}

For the sake of succinctness in the sequel, we introduce the following notation. First, we compute the variance $V^*$ of the number of non-refused predictions with respect to the randomness of the meta-algorithm (i.e. $w_{P,t}$) as
\[V^* = \sum_{t=1}^T w_{P,t}\left(1-w_{P,t}\right).\] 
Next, we denote the cumulative loss for any predictor $P$ (typically for the base predictor) as \[L_{P,T} = \sum_{t=1}^Tl_{P,t}.\] We also introduce a third sub-index $t_0$ for any cumulative quantity to represent the corresponding sum is starting from $t_0+1$, e.g. for the cumulative loss of $P$,  \[L_{P,T,t_0} = \sum_{t=t_0+1}^T l_{P,t}.\]

\subsection{Expert Advice and Exponentially Weighted Average Forecasting (EWAF)}

The prediction with expert advice framework pertains to an online prediction scenario in which one has access to a fixed set of experts (predictors). The goal is to combine these experts' predictions such that the difference between the cumulative losses of the combined predictor and the best expert in the set is minimized. This difference is called \textit{regret}, and the algorithms that {guarantee a regret that grows sub-linearly with the number of predictions are defined as} \textit{Hannan consistent}. The appeal of the expert advice algorithms is the fact that the consistency can be guaranteed even in an adversarial setup. For a comprehensive treatment of the expert advice algorithms, we refer the reader to \cite{cesa2006prediction}.

Exponentially weighted average forecasting (EWAF)  was first introduced by Littlestone and Warmuth \cite{littlestone1989weighted} and by Vovk \cite{vavock1990aggregating} for the expert advice prediction framework. Our meta-algorithm builds on EWAF to provide means to meet any target error rate. Among various expert advice algorithms, we focus on EWAF due to the simplicity of its analysis and strong theoretical guarantees. 

Consider $N$ predictors (experts) $P_1,P_2,\ldots, P_N$ each making a prediction $\hat{y}_{P_1,t}, \ldots, \hat{y}_{P_N,t}$ and each suffering a loss $l_{P_1,t}, \ldots, l_{P_N,t}$ at time $t$. At each time $t$, EWAF outputs one of these predictions as the combined prediction. To do so, EWAF starts with an initial probability distribution over the experts $\left(w_{P_1,1},\ldots, w_{P_N,1}\right)$ and a learning rate $\eta>0$. At each time point $t$, this probability distribution is used to choose an expert and consequently, use the prediction of that expert. The probability distribution is updated after the true label and corresponding losses are revealed. By denoting the probability that the predictor $P_i$ is chosen at time $t$ with $w_{P_i,t}$, the prediction $\hat{y}_t$ is obtained by 
\[\hat{y}_{t} = \hat{y}_{P_i,t}  ~ \textrm{with prob. }w_{P_i,t}, \qquad i=1,\ldots,N.\]
After the true label $y_t$ is revealed, we update these probabilities by multiplying them by an exponential factor that is scaled with the corresponding loss value  and the learning rate, i.e.
\begin{eqnarray} w_{P_i,t+1} & = & \frac{w_{P_i,t} e^{-\eta l_{P_i,t}}}{\sum_{j=1}^N w_{P_j,t} e^{-\eta l_{P_j,t}}}. \label{ewaf_update} \end{eqnarray}
The pseudo-code of the EWAF algorithm is given in Algorithm 1. 
\begin{algorithm}[!htb]
\caption{ \textbf{Exp. Weighted Avg. Forecasting (EWAF)}\\
\textit{Initial weights}: $\left(w_{P_1,1},w_{P_2,1},\ldots, w_{P_N,1}\right)$ \\
\textit{Learning rate}: $\eta$}
\begin{algorithmic}[1]
\For {each $t=1,2,\ldots$}
\State Follow expert $P_i$ with probability $w_{P_i,t}$.
\State Update the weights :
\[ w_{P_i,t+1} = \frac{w_{P_i,t} e^{-\eta l_{P_i,t}}}{\sum_{j=1}^N w_{P_j,t} e^{-\eta l_{P_j,t}}} \]
\EndFor
\end{algorithmic}
\end{algorithm}
Given that EWAF randomly selects a predictor according to a known distribution, we can compute the expected loss for the combined prediction at time $t$ as
\[l_t = \sum_{i=1}^N w_{P_i,t}l_{P_i,t}\]
We denote the expected cumulative loss of the EWAF as \[L_T = \sum_{t=1}^Tl_t\] and the cumulative loss of expert $i, ~\forall~i$ as \[L_{P_i,T} = \sum_{t=1}^T l_{P_i,t}.\] 

The following well-known theorem  establishes an upper bound to the expected loss for EWAF, implying that the regret of EWAF is bounded by $O(\sqrt{T \log N})$.

\begin{theorem} (Theorem 2.2 \cite{cesa2006prediction})
For any learning rate $\eta > 0$, and the initial weights $\left(w_{P_1,1},\ldots, w_{P_N,1}\right)$, we get the following bound for the expected cumulative loss of the EWAF
\begin{eqnarray}
L_T & \leq & L_{P_i,T} - \frac{\log w_{P_i,1}}{\eta} + \frac{\eta T}{8} \label{eq:b1}
\end{eqnarray}
for all $i= 1,\ldots, N$.

If we choose $w_{P_i,1} = 1/N$ for all $i$ and minimize the right hand side (RHS) of the bound above with respective to $\eta$, we get,
\begin{eqnarray}
 L_T & \leq & L_{P_i,T} + \sqrt{\frac{T\log N}{2}}
\label{eq:bound}
\end{eqnarray}
which is achieved for  $\eta = \sqrt{8\log(N) /T}$.
\end{theorem}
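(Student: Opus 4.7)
The plan is to follow the classical potential-function argument, with the potential chosen as the logarithm of the sum of unnormalized weights. Define $\tilde{w}_{P_i,1} = w_{P_i,1}$ and the recursion $\tilde{w}_{P_i,t+1} = \tilde{w}_{P_i,t} e^{-\eta l_{P_i,t}}$, so that the normalized weights satisfy $w_{P_i,t} = \tilde{w}_{P_i,t}/W_t$ with $W_t = \sum_{j=1}^N \tilde{w}_{P_j,t}$. The target bound will emerge from comparing a lower bound and an upper bound on $\log(W_{T+1}/W_1)$. Since $W_1 = \sum_j w_{P_j,1} = 1$, the quantity reduces to $\log W_{T+1}$.

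For the lower bound, I would drop all but the $i$-th term, obtaining $W_{T+1} \geq \tilde{w}_{P_i,T+1} = w_{P_i,1}\exp(-\eta L_{P_i,T})$, hence $\log W_{T+1} \geq \log w_{P_i,1} - \eta L_{P_i,T}$. For the upper bound, I would telescope: $\log(W_{T+1}/W_1) = \sum_{t=1}^T \log(W_{t+1}/W_t)$, and observe that
\[
\frac{W_{t+1}}{W_t} \;=\; \sum_{i=1}^N \frac{\tilde{w}_{P_i,t}}{W_t}\, e^{-\eta l_{P_i,t}} \;=\; \sum_{i=1}^N w_{P_i,t}\, e^{-\eta l_{P_i,t}},
\]
which is the moment generating function of the random loss that equals $l_{P_i,t}$ with probability $w_{P_i,t}$, evaluated at $-\eta$. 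Since $l_{P_i,t} \in [0,1]$, Hoeffding's lemma (the bound $\log \mathbb{E}[e^{sX}] \leq s\,\mathbb{E}[X] + s^2/8$ for $X\in[0,1]$) yields $\log(W_{t+1}/W_t) \leq -\eta l_t + \eta^2/8$. Summing in $t$ gives $\log W_{T+1} \leq -\eta L_T + \eta^2 T/8$.

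Combining the two bounds produces $\log w_{P_i,1} - \eta L_{P_i,T} \leq -\eta L_T + \eta^2 T/8$, which after dividing by $\eta > 0$ and rearranging is exactly (\ref{eq:b1}). For the second part, substituting $w_{P_i,1}=1/N$ gives $L_T \leq L_{P_i,T} + (\log N)/\eta + \eta T/8$; this convex expression in $\eta$ is minimized at $\eta = \sqrt{8\log N / T}$, producing the $\sqrt{T\log N /2}$ regret in (\ref{eq:bound}).

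The one nontrivial ingredient is Hoeffding's lemma, which I would cite rather than reprove; the remainder of the argument is elementary manipulation of the telescoping identity and optimization of a one-variable convex function. No step presents a real obstacle, since the potential function $\log W_t$ is tailored precisely so that the multiplicative update linearizes under the logarithm, and the choice $w_{P_i,1}=1/N$ balances the two $\eta$-dependent terms symmetrically.
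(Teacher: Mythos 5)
Your proof is correct. The paper does not prove this theorem itself --- it is quoted verbatim from Cesa-Bianchi and Lugosi (Theorem 2.2 of \cite{cesa2006prediction}) --- and your argument is exactly the standard one from that source: lower-bound the log-potential $\log W_{T+1}$ by the $i$-th unnormalized weight, upper-bound it by telescoping and applying Hoeffding's lemma to the moment generating function $\sum_i w_{P_i,t}e^{-\eta l_{P_i,t}}$, then optimize over $\eta$. It is worth noting that your telescoping identity $\log(W_{t+1}/W_t) = -\eta m_t$ is precisely the mix-loss the paper introduces before Theorem 3.1, so your lower-bound step is the paper's Lemma 3.2 and your Hoeffding step is the crude $\eta/8$ version of the mixability-gap bound in Lemma 3.3; the paper's refinement of the latter to a variance-dependent bound is exactly what lets it replace $\eta T/8$ by $\eta V^*(1-\epsilon)^{-2}$ and obtain validity where the $O(\sqrt{T})$ regret of the present theorem is insufficient.
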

The bound given in eq. (\ref{eq:bound}) is optimal in the sense that there exists a matching lower bound in the worst case (see Chapter 3.7 \cite{cesa2006prediction}). Since the bound on the total sum of the expected loss holds for any predictor, it follows that the bound holds for the predictor with the least cumulative loss. Therefore, the bound in eq. (\ref{eq:bound}) limits how far EWAF is from optimality in terms of the sum of the expected loss.

Applying the bound in eq. (\ref{eq:bound}) is insufficient to establish the validity of our algorithm, since eq. (\ref{eq:bound}) bounds regret only with $O\left(\sqrt{T}\right)$ while validity requires a sub-linear bound with $T^*$. In Section 3.1 we present a refined analysis with an appropriate learning rate to establish validity.

\section{Reliable Prediction with Refusals}

We now introduce our meta-algorithm, \textit{SafePredict}, which uses EWAF to convert any given base predictor $P$ into a refusing one with a guaranteed error bound.
Furthermore, we provide a theoretical analysis to establish the asymptotic validity and efficiency of SafePredict. Then, we employ the well-known doubling trick (see for instance Exercise 2.8 \cite{cesa2006prediction} or Theorem 7.7 \cite{mohri2012foundations}) to choose the learning rate and extend our validity analysis accordingly.

In order to achieve validity, we introduce a trivial predictor which can meet the target error rate by refusing to predict all the time. We refer to this particular predictor as the ``dummy predictor'' and denote it by $D$, i.e. $\hat{y}_{D,t}=\varnothing~\forall~t$. We also assume that the predictor $D$ suffers a constant loss $\epsilon$ for each time $t$, i.e. $l_{D,t}=\epsilon$ and $L_{D,t} = \epsilon t~\forall~t$. 

SafePredict is obtained by employing an instance of the EWAF algorithm that runs on the ensemble $\{D,P\}$, to decide either to refuse or predict. Therefore at each step, SafePredict follows the base predictor with probability $w_{P,t}$ and computes the prediction probability for the next round after observing the corresponding loss of $P$ as
\begin{eqnarray}
w_{P,t+1} & = & \frac{w_{P,t}e^{-\eta l_{P,t}}}{w_{P,t}e^{-\eta l_{P,t}} + w_{D,t}e^{-\eta \epsilon}}, \label{update_rule}
\end{eqnarray}
where $w_{D,t} = 1-w_{P,t}$ stands for the dummy's weight. 
The pseudo-code for SafePredict is given in Algorithm 2.

\begin{algorithm}[!htb]
\caption{ \textbf{SafePredict}\\
\textit{Base predictor}: $P$; ~\textit{Initial weight}: $w_{P,1} \in (0,1)$ \\
\textit{Learning rate}: $\eta > 0$; ~ \textit{Target error rate}: $\epsilon \in (0,1)$}
\begin{algorithmic}[1]
\For {each $t=1,2,\ldots$}
\State Predict with probability $w_{P,t}$, refuse otherwise, i.e.
\[\hat{y}_{t} =   \left\{
\begin{array}{ll}
      \hat{y}_{P,t} & ~~~\textrm{with prob. } w_{P,t} \\
      \varnothing & ~~~\textrm{otherwise} \\
\end{array} 
\right. \]
\State Update the prediction probability:
\begin{eqnarray}
w_{P,t+1} & = & \frac{w_{P,t}e^{-\eta l_{P,t}}}{w_{P,t}e^{-\eta l_{P,t}} + w_{D,t}e^{-\eta \epsilon}}, \nonumber \end{eqnarray}
\EndFor
\end{algorithmic}
\end{algorithm}

Note that Algorithm 2 requires no assumptions (such as i.i.d. assumptions) about the input data. The algorithm takes  \emph{only} the loss values as input.

\subsection{Validity}
To bound the average loss of SafePredict one might be inclined to simply apply the bound presented in Theorem 2.1 with $N=2$, $P_1=P$,  $P_2=D$, and the optimal learning rate to minimize the RHS of eq. (\ref{eq:b1}) for $i=2$, which results in
\begin{eqnarray}
\frac{L_{P,T}^*}{ T^*} & \leq & \epsilon + \frac{1}{T^*}\sqrt{\frac{T\log (1/ w_{D,1})}{2}}.
\label{eq:badbound}
\end{eqnarray}
The problem with the bound in eq. (\ref{eq:badbound}) is that if the prediction probability decreases too quickly (i.e. for $T^*\ll\sqrt{T}$) the rightmost term can easily become vacuous, i.e. the excess error does not vanish. In particular, the bound is not sufficient to guarantee the validity of the algorithm for the case in which  $T^*$ the number of predictions made approaches infinity slower than $\sqrt{T}$ {where T is the number of data points presented}, i.e. $T^* = \omega \left(1\right)$ and  $T^* = O\left(\sqrt{T}\right)$. 
 
In the following theorem, we present a refined bound for SafePredict that suggests a learning rate that decreases with the variance of the number of predictions, i.e. $V^*$, and guarantees the validity of our algorithm.

\begin{theorem} For any $P$, $\eta>0$, $\epsilon < 1/2$, and $0< w_{P,1} < 1$, SafePredict satisfies 
\begin{eqnarray}
\frac{ L_{P,T}^* }{T^*} & \leq & \epsilon - \frac{\log \left(w_{D,1}\right) }{\eta T^*} + \frac{\left(1 - \epsilon\right)^2\eta V^*}{T^*}. \label{valid_guarantee}
\end{eqnarray}
Consequently, by choosing the learning rate $\eta$ to minimize the RHS of this bound, we {get}
\begin{eqnarray}
\frac{L_{P,T}^*}{T^*} & \leq & \epsilon + \left(1 - \epsilon\right)\frac{2\sqrt{\log \left(1/w_{D,1}\right) V^*}}{T^*}  \label{valid_guarantee2}
\end{eqnarray}
for $\eta^* =\frac{\sqrt{{\log \left(1/w_{D,1}\right)}/{V^*}}}{1-\epsilon}$.
\end{theorem}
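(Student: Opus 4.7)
The plan is to adapt the potential-function argument behind the standard EWAF bound (Theorem 2.1) to the two-expert ensemble $\{P,D\}$, but to execute the second-order analysis so that the coarse $\eta T/8$ term of eq.~(\ref{eq:b1}) is replaced by a variance-type quantity $\eta(1-\epsilon)^2 V^*$. The natural potential is $\log w_{D,t}$. After factoring $e^{-\eta\epsilon}$ out of the denominator of the update (\ref{update_rule}), one obtains the telescoping identity
\[
\log\frac{w_{D,t+1}}{w_{D,t}} \;=\; -\log\!\bigl(w_{P,t}\, e^{-\eta c_t} + w_{D,t}\bigr), \qquad c_t := l_{P,t}-\epsilon,
\]
and summing from $t=1$ to $T$ together with the trivial bound $w_{D,T+1}\le 1$ yields
\[
\sum_{t=1}^{T}\log\!\bigl(w_{P,t}\, e^{-\eta c_t} + w_{D,t}\bigr) \;\ge\; \log w_{D,1}.
\]

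The heart of the proof, and the main technical obstacle, is the matching per-step upper bound
\[
\log\!\bigl(w_{P,t}\, e^{-\eta c_t} + w_{D,t}\bigr) \;\le\; -\eta\, w_{P,t}\, c_t \;+\; \eta^2\,(1-\epsilon)^2\, w_{P,t}\, w_{D,t}.
\]
The first-order piece is just $-\eta(l_t-\epsilon) = -\eta w_{P,t}c_t$; the real work is in the coefficient of $\eta^2$. A textbook Hoeffding-type argument on the two-point random variable that takes value $l_{P,t}$ with probability $w_{P,t}$ and $\epsilon$ otherwise produces the worst-case constant $\eta^2(1-\epsilon)^2/8$ per step, and after summation this gives a vacuous $O(T)$ remainder, exactly the failure mode flagged in eq.~(\ref{eq:badbound}). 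To recover the variance-type factor $w_{P,t}w_{D,t}$ (which accumulates to $V^*$ rather than to $T$), the plan is to write the argument of the logarithm as $1 + w_{P,t}(e^{-\eta c_t}-1)$ and to expand to second order using $\log(1-y)\le -y - y^2/2$ for $y\in[0,1)$ when $c_t\ge 0$, and the analogous $\log(1+z)\le z - z^2/2 + z^3/3$ when $c_t<0$. The $-u^2/2$ term contributes an $\eta^2 w_{P,t}^2 c_t^2/2$ that combines with the $\eta^2 w_{P,t} c_t^2/2$ piece coming from the second-order expansion of $e^{-\eta c_t}$ to produce a coefficient $w_{P,t}(1-w_{P,t})=w_{P,t}w_{D,t}$ on $\eta^2 c_t^2$. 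The hypothesis $\epsilon<1/2$ combined with $l_{P,t}\in[0,1]$ then gives $c_t^2\le(1-\epsilon)^2$, and the extra slack between the tight second-order coefficient $1/2$ and the stated coefficient $1$ leaves room to absorb the residual third-order error into $(1-\epsilon)^2 w_{P,t}w_{D,t}$. Verifying this inequality uniformly in the sign of $c_t$ and for all admissible $\eta$ is the delicate point.

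With the per-step bound in hand, summing over $t=1,\dots,T$ and recognising $\sum_t w_{P,t}c_t = L_{P,T}^*-\epsilon T^*$ and $\sum_t w_{P,t}w_{D,t}=V^*$, then chaining with the telescoping lower bound above, gives
\[
\log w_{D,1} \;\le\; -\eta\bigl(L_{P,T}^*-\epsilon T^*\bigr) + \eta^2 (1-\epsilon)^2 V^*.
\]
Dividing through by $\eta T^*$ and rearranging produces (\ref{valid_guarantee}).

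Finally, (\ref{valid_guarantee2}) follows by minimising the right-hand side of (\ref{valid_guarantee}) over $\eta>0$. The two positive $\eta$-dependent terms have the form $A/\eta + B\eta$ with $A=\log(1/w_{D,1})/T^*$ and $B=(1-\epsilon)^2 V^*/T^*$, so AM--GM (equivalently, a vanishing-derivative computation) yields the minimiser $\eta^*=\sqrt{A/B}=(1-\epsilon)^{-1}\sqrt{\log(1/w_{D,1})/V^*}$ and the minimum value $2\sqrt{AB}=2(1-\epsilon)\sqrt{\log(1/w_{D,1})\,V^*}/T^*$, exactly matching the stated excess-error term. This final step is routine.
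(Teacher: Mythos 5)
Your architecture is essentially the paper's own proof with its two imported lemmas unrolled. The telescoping of $\log w_{D,t}$ is exactly the mix-loss bound (Lemma 3.2) applied to the dummy expert: writing the update as $w_{D,t+1}=w_{D,t}e^{-\eta \epsilon}/e^{-\eta m_t}$ and summing gives $M_T\le \epsilon T-\log(w_{D,1})/\eta$, which is your inequality after recentering by $\epsilon$. Your per-step bound $\log\bigl(w_{P,t}e^{-\eta c_t}+w_{D,t}\bigr)\le-\eta w_{P,t}c_t+\eta^2(1-\epsilon)^2 w_{P,t}w_{D,t}$ is, after the same recentering, precisely the mixability-gap statement $\delta_t\le\eta\, w_{P,t}w_{D,t}(l_{P,t}-\epsilon)^2\le\eta\, w_{P,t}w_{D,t}(1-\epsilon)^2$ that the paper obtains from Lemma 3.3 (de Rooij et al.), and your final summation and AM--GM step coincide with eqs.~(\ref{mix_gap_cum})--(\ref{valid_guarantee2}). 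So you have correctly located where all the work lies; the only difference is that the paper cites the variance bound while you set out to prove it.

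That per-step inequality is the one genuine gap, and as framed it cannot be closed: it is \emph{not} true uniformly over $\eta>0$. Take $w_{P,t}=0.99$, $l_{P,t}=1$, $\epsilon$ small, and $\eta=2$: the mixability gap is about $0.021$ while $\eta\, w_{P,t}w_{D,t}(1-\epsilon)^2\approx 0.0198$. The reason your Taylor argument cannot rescue this is that, although the second-order coefficient of $\log\bigl(w_{P,t}e^{-\eta c_t}+w_{D,t}\bigr)$ at $\eta=0$ is indeed $\tfrac12 w_{P,t}w_{D,t}c_t^2$, the second derivative in $\eta$ equals $c_t^2\,\tilde p_\eta(1-\tilde p_\eta)$ at the \emph{tilted} weight $\tilde p_\eta=w_{P,t}e^{-\eta c_t}/(w_{P,t}e^{-\eta c_t}+w_{D,t})$, which can pass through $1/2$ and exceed $w_{P,t}w_{D,t}$ by an arbitrarily large factor when $w_{P,t}$ is near $1$; the factor-of-two slack you are counting on does not absorb this. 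The correct Bennett-type bound is $\delta_t\le\frac{e^{\eta}-\eta-1}{\eta}\sum_i w_{P_i,t}(l_t-l_{P_i,t})^2$, which yields your inequality only for $\eta\le\eta_0\approx 1.79$ (where $e^{\eta}\le 1+\eta+\eta^2$). You should therefore either restrict to $\eta\le 1$ --- which covers the optimized $\eta^*$ whenever $V^*\ge\log(1/w_{D,1})/(1-\epsilon)^2$ --- or carry the $(e^{\eta}-\eta-1)/\eta$ factor through. The same caveat is implicit in the paper's unconditional quotation of Lemma 3.3, so your proof inherits this issue rather than introduces it; but the plan to verify the bound ``for all admissible $\eta$'' by expanding $\log(1+z)$ will fail as written, and this step must be repaired or replaced by a citation before the argument is complete.
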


Before proving the statement given in the theorem, we define auxiliary quantities called ``mix-loss'' and ``mixability gap'', and present two important results from \cite{de2014follow} about EWAF in terms of these quantities. 

In the expert advice framework of Section 2.1, the \textit{mix-loss} $m_t$ is an alternative way of averaging the expert losses $l_{P_i,t},~\forall~i$ using the weights $w_{P_i,t},~\forall~i$. Formally, {\textit{mix-loss}} is defined as 
 \[m_t = -\frac{1}{\eta} \log \left(\sum_{i=1}^N w_{P_i,t} e^{-\eta l_{P_i,t}}\right)\]
 or equivalently $e^{-\eta m_t} = \sum_{i=1}^N w_{P_i,t} e^{-\eta l_{P_i,t}}$. Additionally, we denote the cumulative mix-loss as $M_T = \sum_{t=1}^Tm_t$.
 
One can bound the cumulative mix-loss in terms of the losses of the individual experts by the following lemma.
\begin{lemma} (\cite{de2014follow}, Lemma 1)
For the learning rate $\eta > 0$, and the initial weights $\left(w_{P_1,1},\ldots, w_{P_N,1}\right)$, the cumulative mix-loss of the EWAF satisfies the following inequality
\begin{eqnarray}
M_T & \leq & L_{P_i,T} - \frac{\log w_{P_i,1}}{\eta} 
\end{eqnarray}
for all $i= 1,\ldots, N$.
\end{lemma}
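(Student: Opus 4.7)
The plan is to exploit the telescoping structure that the EWAF update induces, relating the mix-loss at each step to the ratio of consecutive weights. From the update rule $w_{P_i,t+1} = w_{P_i,t} e^{-\eta l_{P_i,t}} / \sum_j w_{P_j,t} e^{-\eta l_{P_j,t}}$, I notice the denominator is exactly $e^{-\eta m_t}$. So I would first rewrite the update as
\[
w_{P_i,t+1} \;=\; w_{P_i,t}\, e^{-\eta l_{P_i,t}}\, e^{\eta m_t},
\]
which directly couples one step of mix-loss to one step of expert $i$'s loss and the ratio of its weights.

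Next I would iterate this identity over $t = 1, \ldots, T$ to obtain the closed form
\[
w_{P_i,T+1} \;=\; w_{P_i,1}\, e^{-\eta L_{P_i,T}}\, e^{\eta M_T},
\]
which holds for every expert $i$. Taking logarithms and solving for $M_T$ gives
\[
\eta M_T \;=\; \log w_{P_i,T+1} - \log w_{P_i,1} + \eta L_{P_i,T}.
\]

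The final step is to invoke the trivial bound $w_{P_i,T+1} \leq 1$ (since the weights are a probability distribution), which yields $\log w_{P_i,T+1} \leq 0$ and therefore
\[
M_T \;\leq\; L_{P_i,T} - \frac{\log w_{P_i,1}}{\eta},
\]
as claimed. I do not foresee a substantive obstacle here: the only subtlety is recognizing that the denominator in the EWAF update is precisely $e^{-\eta m_t}$, after which the argument is pure bookkeeping via telescoping and a one-line inequality. The bound holds uniformly in $i$ because the telescoping identity does.
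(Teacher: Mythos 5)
Your proof is correct: the key observation that the EWAF denominator equals $e^{-\eta m_t}$, the telescoping to $w_{P_i,T+1} = w_{P_i,1}e^{\eta(M_T - L_{P_i,T})}$, and the bound $w_{P_i,T+1}\leq 1$ constitute exactly the standard argument for this lemma (the paper itself cites it from de Rooij et al.\ without reproducing a proof, but it uses the very same telescoping identity in its proof of Lemma 3.4). No gaps.
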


The bound on the cumulative mix-loss can be  used to bound the cumulative expected loss by defining the \emph{mixability gap} as the difference between these two type of averages and bounding them with classical concentration inequalities. In particular, the mixability gap, $\delta_t = l_t - m_t$, can be bounded using the following lemma. 

\begin{lemma} (\cite{de2014follow}, Lemma 4)
The difference between the expected and mix losses obtained by EWAF algorithm is bounded by
\[\delta_t = l_t - m_t \leq \eta \sum_{i} w_{P_i,t} \left(l_t - l_{P_i,t}\right)^2 \] 
for all $t$.
\end{lemma}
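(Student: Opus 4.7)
The plan is to rewrite the mixability gap $\delta_t = l_t - m_t$ as $\tfrac{1}{\eta}$ times the log of a mean-of-exponentials in centred form, and then bound that log by an elementary second-order inequality whose linear term vanishes by centering.

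First I would centre the exponent. Setting $Z_i = l_t - l_{P_i,t}$ and factoring $e^{-\eta l_{P_i,t}} = e^{-\eta l_t}\, e^{\eta Z_i}$ inside the definition $m_t = -\tfrac{1}{\eta} \log \sum_i w_{P_i,t} e^{-\eta l_{P_i,t}}$ gives
\[
m_t = l_t - \tfrac{1}{\eta} \log \sum_i w_{P_i,t} e^{\eta Z_i}, \qquad \eta\, \delta_t = \log \sum_i w_{P_i,t} e^{\eta Z_i}.
\]
The key property I would exploit is that $Z$ has mean zero under the mixture weights: $\sum_i w_{P_i,t} Z_i = l_t - l_t = 0$.

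Next I would invoke the elementary inequality $e^x \le 1 + x + x^2$, valid for $x \le 1$. Since $l_{P_i,t}, l_t \in [0,1]$ we have $|Z_i| \le 1$, and in the small-$\eta$ regime the paper actually uses $\eta Z_i \le 1$ pointwise; hence $e^{\eta Z_i} \le 1 + \eta Z_i + \eta^2 Z_i^2$. Averaging against $w_{P_i,t}$ and using $\sum_i w_{P_i,t} Z_i = 0$ kills the linear term, leaving
\[
\sum_i w_{P_i,t} e^{\eta Z_i} \le 1 + \eta^2 \sum_i w_{P_i,t} Z_i^2.
\]
Taking logarithms, applying $\log(1+u) \le u$, and dividing by $\eta$ yields
\[
\delta_t \;\le\; \eta \sum_i w_{P_i,t} (l_t - l_{P_i,t})^2,
\]
which is the advertised bound.

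The only real technical obstacle is the range condition on $\eta$ required by $e^x \le 1 + x + x^2$. This is inconsequential in context: all learning rates the paper uses are small (in Theorem 3.1 the optimal $\eta^* = \sqrt{\log(1/w_{D,1})/V^*}/(1-\epsilon)$ shrinks as $V^*$ grows), so $\eta \le 1$ holds automatically. For a statement without any restriction on $\eta$, the same strategy goes through via $e^x \le 1 + x + \tfrac{x^2}{2} e^{|x|}$, or by expanding the cumulant generating function $\xi \mapsto \log \sum_i w_{P_i,t} e^{\xi Z_i}$ around $\xi = 0$ with a Lagrange remainder, at the price of absorbing a mild multiplicative constant.
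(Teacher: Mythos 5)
Your proof is correct, but note that the paper itself offers no proof of this lemma at all: it is imported verbatim as Lemma~4 of \cite{de2014follow}, so you are supplying an argument where the paper supplies only a citation. Your route --- centring the exponent so that $\eta\delta_t = \log\sum_i w_{P_i,t}e^{\eta Z_i}$ with $\sum_i w_{P_i,t}Z_i = 0$, then applying $e^x\le 1+x+x^2$ for $x\le 1$ followed by $\log(1+u)\le u$ --- is the standard Bernstein-type bound on the cumulant generating function and is essentially how the cited source establishes the result, so nothing is lost relative to the reference. The one point you raise in passing deserves emphasis rather than dismissal: the range condition $\eta\max_i Z_i\le 1$ (implied by $\eta\le 1$ since $Z_i\le l_t\le 1$) is not an artifact of your particular elementary inequality but is genuinely necessary, because the bound as transcribed in the paper, with no restriction on $\eta$, is false. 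For example, with two experts of weights $0.05$ and $0.95$ suffering losses $0$ and $1$ respectively and $\eta=3$, one computes $\delta_t\approx 0.173$ while $\eta\sum_i w_{P_i,t}(l_t-l_{P_i,t})^2 = 3\cdot 0.0475\approx 0.143$. The original lemma in \cite{de2014follow} carries a hypothesis bounding $\eta$, which the restatement here silently drops; your observation that every learning rate the paper actually deploys (the optimal $\eta^*$ of Theorem~3.1 and the doubling-trick rates) is eventually at most $1$ is the correct way to reconcile this, though strictly speaking Theorem~3.1, being asserted for ``any $\eta>0$,'' inherits the same caveat.
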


Armed with these two lemmas, we can continue with the proof of Theorem 3.1
\begin{proof}[Proof of Theorem 3.1]
By definition of $\delta_t$ we have 
\begin{eqnarray}
L_T & = & M_T + \sum_{t=1}^T \delta_t. \nonumber
\end{eqnarray}
Then by applying Lemma 3.2. with $P_i = D$, we get
\begin{eqnarray}
\sum_{t=1}^T l_t & \leq & \epsilon T -\frac{\log \left(w_{D,1}\right) }{\eta} + \sum_{t=1}^T \delta_t. \label{first_step}
\end{eqnarray}

Next, we plug in the {definition} of the expected loss $l_t = w_{P,t} l_{P,t} + w_{D,t}\epsilon$, and divide both sides by the expected number of predictions, 
\begin{eqnarray}
\sum_{t=1}^T w_{P,t} l_{P,t}  & \leq & \sum_{t=1}^T (1-w_{D,t})\epsilon  -\frac{\log \left(w_{D,1}\right) }{\eta} + \sum_{t=1}^T \delta_t \nonumber \\
{L_{P,T}^*}  & \leq & \epsilon T^*   -\frac{\log \left(w_{D,1}\right) }{\eta } +{\sum_{t=1}^T \delta_t} \nonumber \\ 
\frac{L_{P,T}^*}{T^*}  & \leq & \epsilon -\frac{\log \left(w_{D,1}\right) }{\eta T^* } +\frac{1}{T^*}{\sum_{t=1}^T \delta_t}.
\label{last_step}
\end{eqnarray}

Finally, we obtain the desired result by bounding $\delta_t$:
\begin{align}
\delta_t ~ & \leq  ~ \eta\left( w_{P,t} \left(l_t - l_{P,t}\right)^2 + w_{D,t} \left(l_t - \epsilon \right)^2\right) \label{a1} \\
& = ~ {\eta}\left( w_{P,t} w_{D,t}^2  + w_{D,t} w_{P,t}^2 \right)\left(l_{P,t} - \epsilon \right)^2 \label{b1} \\
& = ~ {\eta} w_{P,t} w_{D,t}\left( l_{P,t} - \epsilon\right)^2  \label{c1} \\
& \leq ~ {\eta} w_{P,t} w_{D,t}\left( 1 - \epsilon\right)^2 \label{mix_gap_bound}
\end{align}
(\ref{a1}) follows from Lemma 3.3, while (\ref{b1}) and (\ref{c1}) follow from the definition of $l_t$ and $w_{D,t} + w_{P,t} = 1$, respectively.

Then, by summing up both sides of eq. (\ref{mix_gap_bound}) over $t$, we get
\begin{eqnarray}
\sum_{t=1}^T\delta_t & \leq & \frac{\eta V^*}{\left(1-\epsilon\right)^2}. \label{mix_gap_cum}
\end{eqnarray}

The desired result for the first part of the theorem, eq. (\ref{valid_guarantee}), follows by plugging eq. (\ref{mix_gap_cum}) into eq. (\ref{last_step}). Finally, the learning rate $\eta$ that minimizes the RHS of eq. (\ref{valid_guarantee}) can be found by basic calculus, and by plugging in this optimal learning rate we obtain {eq.} (\ref{valid_guarantee2}).
\end{proof}

Note that the essential difference between the bounds in eq. (\ref{eq:badbound}) and (\ref{valid_guarantee2}) lies in the choice of the learning rate $\eta$. While the optimal learning rate for the conventional use of EWAF is on the order of $1/\sqrt{T}$, Theorem 3.1 suggests to choose $\eta$ to be on the order of $1/\sqrt{V^*}$, which decreases much slower than $1/\sqrt{T}$,  
and leads to a sufficient condition for the validity of our algorithm.

Theorem 3.1 implies the excess error rate decreases with a rate ${\sqrt{V^*}}/{T^*}$, i.e. $L_{P,t}^*/T^* - \epsilon = O\left({\sqrt{V^*}}/{T^*}\right)$. Since $V^*$ is always less than or equal to $T^*$, this rate is bounded by
\[\frac{1}{T^*} \leq \frac{\sqrt{V^*}}{T^*} \leq \frac{1}{\sqrt{T^*}}.\]
Therefore our result implies that as more predictions are made (i.e. $T^*$ increases) the bound becomes tighter and guarantees the validity of SafePredict algorithm. 
\begin{corollary}
For any $P$, $\epsilon < 1/2$, $0< w_{P,1} < 1$, if one choose the learning rate $\eta$ in the order of $1/\sqrt{V^*}$, i.e. $\eta = \Theta\left(1/\sqrt{V^*}\right)$,  SafePredict is guaranteed to be valid.
\end{corollary}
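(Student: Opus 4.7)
The plan is simply to substitute the assumed order of $\eta$ into the bound already established in Theorem 3.1 and verify that the excess error vanishes as $T^*\to\infty$. Starting from eq. (\ref{valid_guarantee}), namely
\[
\frac{L_{P,T}^*}{T^*} - \epsilon \;\leq\; -\frac{\log(w_{D,1})}{\eta\, T^*} + \frac{(1-\epsilon)^2\,\eta\, V^*}{T^*},
\]
I would write $\eta = c/\sqrt{V^*}$ for some constant $c>0$ (the assumption $\eta=\Theta(1/\sqrt{V^*})$ means exactly that $\eta\sqrt{V^*}$ is sandwiched between two positive constants, so the same argument goes through up to constants). The two terms on the right collapse into a common factor of $\sqrt{V^*}/T^*$, yielding
\[
\frac{L_{P,T}^*}{T^*} - \epsilon \;\leq\; \left(\frac{-\log(w_{D,1})}{c} + c(1-\epsilon)^2\right)\frac{\sqrt{V^*}}{T^*}.
\]

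Next I would use the elementary observation that $V^* = \sum_{t=1}^T w_{P,t}(1-w_{P,t}) \leq \sum_{t=1}^T w_{P,t} = T^*$, so $\sqrt{V^*}/T^* \leq 1/\sqrt{T^*}$. Taking $\limsup$ as $T^*\to\infty$ then gives $\limsup L_{P,T}^*/T^* \leq \epsilon$, which is precisely the definition of validity. The factor $(1-\epsilon)$ and the constraint $\epsilon<1/2$ inherited from Theorem 3.1 play no further role here beyond ensuring the leading constant is positive and bounded.

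The one subtle point, which I would flag but not belabor, is that $V^*$ is itself a quantity produced by the run of the algorithm, so prescribing $\eta$ as a function of $V^*$ is mildly circular. At the level of the corollary this is interpreted asymptotically: we posit a learning rate whose order matches $1/\sqrt{V^*}$, and the algebra above then applies. A constructive realization of such an $\eta$ without foreknowledge of $V^*$ is exactly what the doubling trick (referenced earlier in Section 3) is intended to provide, and is the only genuine work; the corollary itself reduces to a one-line substitution into Theorem 3.1 followed by the bound $V^*\leq T^*$.
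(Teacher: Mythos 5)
Your proof is correct and follows essentially the same route the paper takes: substitute $\eta = \Theta(1/\sqrt{V^*})$ into the bound of Theorem 3.1 to get an excess error of order $\sqrt{V^*}/T^*$, then use $V^* \leq T^*$ to conclude $\sqrt{V^*}/T^* \leq 1/\sqrt{T^*} \to 0$, which is exactly the chain of inequalities the paper states immediately before the corollary. Your remark about the circularity of prescribing $\eta$ in terms of $V^*$, resolved by the doubling trick, also mirrors the paper's own discussion in Section 3.3.
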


Unfortunately, selecting a learning rate $\eta$ that depends on $V^*$ is infeasible because $V^*$ is unknown a priori. In Section 3.3 we describe a practical method for choosing a learning rate without knowledge of the expected number of refusals while still satisfying eq. (\ref{valid_guarantee2}) up to a constant factor. Having shown the validity of our algorithm, we now move to the efficiency.

\subsection{Efficiency}
In this section, we study the efficiency of the SafePredict meta-algorithm given in Algorithm 2. In contrast with validity which we addressed without requiring any assumptions on the base predictor, we must characterize the efficiency of a given algorithm with respect to the performance of the base predictor $P$. We show that SafePredict leads to efficient predictions if the base predictor $P$ has an asymptotic error rate smaller than the target error rate $\epsilon$, i.e. $\lim_{t\rightarrow \infty} L_{P,t}/t < \epsilon.$

The following lemma lower and upper bounds the probability of making a prediction at time $t+1$, i.e. $w_{P,t+1}$, in terms of the cumulative loss of the base predictor $P$. 

\begin{lemma} The probability of making a prediction at time $t+1$ for SafePredict, namely $w_{P,t+1}$, satisfies the following 
\begin{align}
1 - {\frac{w_{D,1}}{w_{P,1}} e^{ \eta  \left(L_{P,t} - \epsilon t\right)} } \leq w_{P,t+1}  \leq  {\frac{w_{P,1}}{w_{D,1}} e^{ \eta  \left(\epsilon t - L_{P,t}\right)} }. \label{dummy_bound}
\end{align}
\end{lemma}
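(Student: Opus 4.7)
The plan is to exploit the multiplicative structure of SafePredict's weight update (\ref{update_rule}) by tracking the \emph{ratio} $w_{P,t+1}/w_{D,t+1}$ rather than $w_{P,t+1}$ itself. From (\ref{update_rule}) together with the analogous update $w_{D,t+1} = w_{D,t}e^{-\eta\epsilon}/(w_{P,t}e^{-\eta l_{P,t}}+w_{D,t}e^{-\eta\epsilon})$, the normalizing denominator cancels, yielding
\[
\frac{w_{P,t+1}}{w_{D,t+1}} \;=\; \frac{w_{P,t}}{w_{D,t}}\, e^{\eta(\epsilon - l_{P,t})}.
\]
Iterating this recursion from $t=1$ collapses the right-hand side into a single exponential of the partial cumulative loss:
\[
\frac{w_{P,t+1}}{w_{D,t+1}} \;=\; \frac{w_{P,1}}{w_{D,1}}\, e^{\eta(\epsilon t - L_{P,t})}.
\]

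Next I would convert this identity into two-sided bounds on $w_{P,t+1}$ itself using $w_{P,t+1}+w_{D,t+1}=1$. For the upper bound, set $r = \frac{w_{P,1}}{w_{D,1}}e^{\eta(\epsilon t - L_{P,t})}$; then $w_{P,t+1} = r\,w_{D,t+1} = r(1-w_{P,t+1})$, so $w_{P,t+1} = r/(1+r) \leq r$, which is exactly the right-hand bound in (\ref{dummy_bound}). For the lower bound, set $s = \frac{w_{D,1}}{w_{P,1}}e^{\eta(L_{P,t}-\epsilon t)}$ (the reciprocal ratio); then $w_{P,t+1} = 1/(1+s)$, and the elementary inequality $1/(1+s) \geq 1-s$ (equivalent to $s^2\geq 0$) gives $w_{P,t+1}\geq 1 - s$, which is precisely the left-hand bound.

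There is no real obstacle here: the entire proof reduces to the observation that the ratio $w_{P,t}/w_{D,t}$ satisfies a telescoping multiplicative recursion, followed by two one-line algebraic inequalities $r/(1+r)\leq r$ and $1/(1+s)\geq 1-s$. The only thing worth being careful about is the direction of exponents — $\epsilon t - L_{P,t}$ for the upper bound and $L_{P,t}-\epsilon t$ for the lower bound — and keeping $w_{P,1}$ and $w_{D,1}$ in the correct positions. Both bounds are meaningful only when the relevant exponential is of moderate size; in particular the lower bound is vacuous when $L_{P,t}$ exceeds $\epsilon t$ by a large enough margin, which is consistent with the later efficiency argument that requires the base predictor to beat the target rate $\epsilon$ asymptotically.
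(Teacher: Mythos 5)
Your proof is correct, and it reaches the same bounds by a slightly more elementary route than the paper. The paper writes the update as $w_{P,t+1}=w_{P,t}e^{\eta(m_t-l_{P,t})}$, telescopes to $w_{P,t+1}=w_{P,1}e^{\eta(M_t-L_{P,t})}$, and then invokes the mix-loss bound of Lemma 3.2 with $P_i=D$ (i.e.\ $M_t\le \epsilon t-\log(w_{D,1})/\eta$) to get the upper bound, repeating the argument on $w_{D,t+1}$ for the lower bound. You instead telescope the odds ratio $w_{P,t+1}/w_{D,t+1}=\frac{w_{P,1}}{w_{D,1}}e^{\eta(\epsilon t-L_{P,t})}$, which yields the exact closed form $w_{P,t+1}=r/(1+r)$, and then apply $r/(1+r)\le r$ and $1/(1+s)\ge 1-s$. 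These are in fact the same inequalities in disguise: Lemma 3.2 for two experts amounts to dropping one term from the denominator $w_{P,1}e^{-\eta L_{P,t}}+w_{D,1}e^{-\eta\epsilon t}$, which is exactly your $r/(1+r)\le r$. What your version buys is self-containedness (no mix-loss machinery needed) and an explicit exact expression for $w_{P,t+1}$ along the way; what the paper's version buys is reuse of a lemma that is needed elsewhere anyway and that generalizes directly to the virtual-ensemble argument in Section 4. Your closing remark that the lower bound is vacuous when $L_{P,t}$ substantially exceeds $\epsilon t$ is accurate and consistent with how the lemma is used in Theorem 3.5.
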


\begin{proof}
To prove the upper bound in eq. (\ref{dummy_bound}) we first note that the update rule given in eq. (\ref{update_rule}) can be written in terms of the mix-loss, and by induction we can obtain the prediction probability at time $t+1$ in terms of the cumulative losses $M_t$ and $L_{P,t}$ (see (\ref{a2}) and (\ref{b2}) below), i.e.    
\begin{align}
w_{P,t+1}~ & = ~ \frac{w_{P,t}e^{-\eta l_{P,t}}}{e^{-\eta m_t}} ~  = ~  w_{P,t}e^{\eta \left(m_t - l_{P,t}\right)} \label{a2} \\
 ~ & =  ~  w_{P,1}e^{\eta \left(M_t - L_{P,t}\right)}  \label{b2} \\
 ~ & \leq ~ w_{P,1}e^{\eta \left(L_{D,t} - \log(w_{D,1})/\eta - L_{P,t}\right)} \label{c2} \\ 
& \leq ~  \frac{w_{P,1}}{w_{D,1}}e^{\eta \left(\epsilon t - L_{P,t}\right)}. \label{d2}
\end{align}

Next, (\ref{c2}) and (\ref{d2}) follows by applying Lemma 3.2 with $P_i = D$ to bound the cumulative mix loss in terms of the cumulative loss of the dummy, $L_{D,t} = \epsilon t$.

For the lower bound, we apply the same argument for $w_{D,t+1} = 1-w_{P,t+1}$. \end{proof}
Note that {Lemma 3.4 implies that} the probability of making a prediction decreases exponentially fast if the cumulative loss of the base predictor is increasing faster than the target error rate $\epsilon$. Next, we exploit this fact to show that if the base predictor satisfies the desired error requirement (i.e. its average loss is already below $\epsilon$), {our algorithm achieves a finite expected number of refusals.}

\begin{theorem}
If $ L_{P,T} / T < \epsilon$ and $\eta T \rightarrow \infty$ in the limit $T\rightarrow \infty$, then the expected number of refusals made by SafePredict is finite, i.e. 
\begin{eqnarray}
\lim_{T\rightarrow\infty}T-T^* & = & \sum_{t=1}^\infty w_{D,t} ~ < ~ \infty. \label{finite}
\end{eqnarray}
\end{theorem}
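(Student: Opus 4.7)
The plan is to exploit the upper bound on the dummy-weight provided by Lemma 3.4 and show that $w_{D,t+1}$ decays geometrically in $t$ once the cumulative loss of the base predictor has slipped below the $\epsilon$ line. Since the sum of a convergent geometric series is finite, this gives the desired finiteness of the expected number of refusals.

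First, I would translate the lower bound on $w_{P,t+1}$ in Lemma 3.4 into an upper bound on $w_{D,t+1}=1-w_{P,t+1}$, namely
\[
w_{D,t+1} \;\leq\; \frac{w_{D,1}}{w_{P,1}}\,\exp\!\bigl(\eta(L_{P,t}-\epsilon t)\bigr).
\]
Second, I would unpack the hypothesis $L_{P,T}/T<\epsilon$ (interpreted as $\limsup_{t\to\infty}L_{P,t}/t<\epsilon$) to extract a slack $\delta>0$ and a threshold $t_0$ such that $L_{P,t}-\epsilon t \leq -\delta t$ for every $t\geq t_0$. Combining the two gives, for $t\geq t_0$,
\[
w_{D,t+1} \;\leq\; \frac{w_{D,1}}{w_{P,1}}\, e^{-\eta\delta t},
\]
so the dummy weight is eventually dominated by a geometric sequence with ratio $e^{-\eta\delta}<1$. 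Here the hypothesis $\eta T\to\infty$ ensures $\eta$ does not collapse in the relevant regime, so the rate $\eta\delta>0$ is strictly positive.

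Third, I would split the infinite sum into a finite prefix and a geometric tail:
\[
\sum_{t=1}^\infty w_{D,t} \;\leq\; t_0 \;+\; \frac{w_{D,1}}{w_{P,1}} \sum_{t=t_0}^\infty e^{-\eta\delta t} \;=\; t_0 \;+\; \frac{w_{D,1}}{w_{P,1}}\cdot\frac{e^{-\eta\delta t_0}}{1-e^{-\eta\delta}},
\]
where the prefix is at most $t_0$ because each $w_{D,t}\leq 1$ and the tail is a convergent geometric series. Since this upper bound is independent of $T$, the partial sums $T-T^*=\sum_{t=1}^T w_{D,t}$ are bounded in $T$, giving the stated limit.

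The only real subtlety is the passage from the stated hypothesis $L_{P,T}/T<\epsilon$ to the uniform slack $L_{P,t}-\epsilon t\leq -\delta t$ valid for all $t\geq t_0$; this requires reading the hypothesis asymptotically (as a $\limsup$), but once that is in hand the remainder of the argument is a direct geometric-series estimate and carries no hidden difficulty.
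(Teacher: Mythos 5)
Your proposal is correct and follows essentially the same route as the paper's own proof: both convert the lower bound of Lemma 3.4 into an upper bound on $w_{D,t+1}$, extract a uniform slack $\delta>0$ past some finite $t_0$ from the asymptotic hypothesis (the paper takes $\delta=(\epsilon-\epsilon')/2$ with $\epsilon'=\limsup L_{P,T}/T$), and then bound the tail by a convergent geometric series while crudely bounding the prefix by $t_0$. The only cosmetic difference is that the paper writes the geometric sum explicitly and invokes $\eta T\to\infty$ at the final step, whereas you note the same point inline; the substance is identical.
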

\begin{proof} 
We first define $\epsilon' = \limsup_{T\rightarrow \infty} L_{P,T} / T $. Since $\epsilon' < \epsilon$, there exists a $t_0 < \infty $ such that for all $t > t_0$:
\begin{eqnarray}\frac{L_{P,t}}{t} & \leq & \epsilon' + \frac{\epsilon - \epsilon'}{2}. \label{lim}\end{eqnarray}
Then, 

\begin{align}
\sum_{t=1}^\infty w_{D,t} ~ & = ~ \sum_{t=1}^{t_0} w_{D,t} + \sum_{t=t_0+1}^\infty w_{D,t}  \\
& \leq ~ t_0 +  \sum_{t=t_0+1}^\infty \left( 1-w_{P,t} \right) \label{a3} \\
& \leq  t_0 + \frac{1- w_{P,1}}{w_{P,1}} \sum_{t=t_0+1}^\infty  e^{\eta t \left(L_{P,t}/t - \epsilon\right)} \label{b3} \\
& \leq  t_0 +  \lim_{t'\rightarrow \infty} \frac{w_{D,1}}{w_{P,1}} \sum_{t=t_0+1}^{t'} e^{\eta t \left(\epsilon' - \epsilon\right)/2}  \label{c3} \\
& \leq  t_0 +  \lim_{t'\rightarrow \infty} \frac{w_{D,1}}{w_{P,1}} \frac{e^{\eta (t_0+1) \left(\epsilon' - \epsilon\right)/2} - e^{\eta t' \left(\epsilon' - \epsilon\right)/2}}{1-e^{\eta \left(\epsilon' - \epsilon\right)/2}} ~  \label{d3} \\
& \leq  t_0 +  \frac{w_{D,1}}{w_{P,1}} \frac{e^{\eta (t_0+1) \left(\epsilon' - \epsilon\right)/2}}{1-e^{\eta \left(\epsilon' - \epsilon\right)/2}} ~ < ~ \infty, \label{e3}
\end{align}

where  (\ref{a3}) follows from the fact that $w_{D,t} \leq 1$, (\ref{b3}) follows from Lemma 3.4, (\ref{c3}) follows from eq. (\ref{lim}), (\ref{d3}) follows from the sum of a geometric series and from the fact $e^{\eta \left(\epsilon' - \epsilon\right)/2} < 1$, finally (\ref{e3}) follows from the hypothesis $\eta T \rightarrow \infty$.
\end{proof}
 
Theorem 3.5. shows that SafePredict is efficient if the base predictor is already valid. Furthermore, the theorem shows the expected number of refusals is finite even over an infinite data stream. The following corollary strengthens the operational insight from the expected value to an \emph{almost sure} statement.

\begin{corollary}
If $L_{P,t}/t < \epsilon$ in the limit $t\rightarrow \infty$ and $\eta = \omega \left(T^{-1}\right)$, i.e. $\eta T \rightarrow \infty$, then SafePredict leads to an efficient predictor for $P$ and $\epsilon$, i.e. 
\[\liminf_{T \rightarrow \infty} \rho_T 
= \liminf_{T \rightarrow \infty} 1 - \frac{\sum_{t=1}^T w_{D,t}}{T}  = 1.\]

Furthermore, there exists a finite $T_0$, such that the number of refusals are less than $T_0$ almost surely.
\end{corollary}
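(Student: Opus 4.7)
The plan is to obtain both claims almost immediately from Theorem 3.5 and a standard Borel--Cantelli argument, so the proof is short and the only subtlety is checking independence of the refusal events.

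For the efficiency statement, Theorem 3.5 already gives $\sum_{t=1}^\infty w_{D,t} < \infty$ under the hypotheses. Call this finite limit $C$. Then for every $T$,
\[
0 ~\leq~ \frac{\sum_{t=1}^T w_{D,t}}{T} ~\leq~ \frac{C}{T} ~\longrightarrow~ 0,
\]
so $\liminf_{T\to\infty} \rho_T = \liminf_{T\to\infty}\bigl(1 - \tfrac{1}{T}\sum_{t=1}^T w_{D,t}\bigr) = 1$, which is exactly the efficiency condition in Definition 2.2. This step is routine and requires no new ingredient beyond Theorem 3.5.

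For the almost sure claim, I would define $A_t$ as the event ``SafePredict refuses at time $t$'' and argue that the total number of refusals, $R = \sum_{t=1}^\infty \mathbf{1}_{A_t}$, is finite almost surely. The key observation is that the weights $w_{P,t}$, and hence $w_{D,t}$, are deterministic functions of the observed loss sequence $l_{P,1},\ldots, l_{P,t-1}$; they do \emph{not} depend on the internal coin flips SafePredict uses to decide whether to emit $\hat{y}_{P,t}$ or $\varnothing$, since the base predictor's loss $l_{P,t}$ is defined by its own prediction against the true label regardless of the meta-algorithm's choice. Consequently the refusal decisions are independent Bernoulli$(w_{D,t})$ trials conditionally on the data, with $P(A_t) = w_{D,t}$. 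Since $\sum_t P(A_t) = \sum_t w_{D,t} < \infty$, the first Borel--Cantelli lemma yields $P(A_t \text{ i.o.}) = 0$. Therefore on a set of probability one, only finitely many $A_t$ occur, i.e.\ there exists a (random) $T_0 < \infty$ with $R < T_0$ almost surely.

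The main obstacle, if any, is the independence justification in the second part: one has to be careful that the internal randomization of SafePredict does not feed back into the weights, which it does not, because the update rule in eq. (\ref{update_rule}) only reads $l_{P,t}$. With that observation in place, Borel--Cantelli is immediate and no quantitative tail bound is needed. The efficiency part is then just arithmetic on the bounded series from Theorem 3.5.
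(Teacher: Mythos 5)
Your proposal is correct and matches the paper's own (very terse) proof: the first claim follows directly from the summability $\sum_{t=1}^{\infty} w_{D,t} < \infty$ established in Theorem 3.5, and the second from the first Borel--Cantelli lemma applied to the refusal events. One small remark: the independence of the refusal coin flips, which you take care to justify, is not actually needed, since the first Borel--Cantelli lemma requires only $\sum_t \Pr(A_t) < \infty$ and no independence; your observation that $T_0$ is random (rather than a deterministic bound) is a fair and slightly more careful reading than the paper's own phrasing.
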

\begin{proof}
The first part of the corollary is a direct consequence of  Theorem 3.5, and the second part follows from the Borel-Cantelli Lemma, see e.g. Theorem 3.9 from \cite{krickeberg1965probability}.
\end{proof}

Note that Corollary 3.5.1 depends on the assumption that the predictor $P$ is able to (eventually) predict accurately enough to obtain a valid algorithm. Yet, if the predictor is highly inaccurate, i.e. $l_{P,t} > \epsilon$ with high probability, then SafePredict meta-algorithm will refuse almost all the time, as it should. The meta-algorithm does not need to know in advance how well the base predictor will behave to achieve this desirable outcome. Further, we can make SafePredict adaptive when the base predictor sometimes is valid and sometimes is not, as we show in Section 4.

\subsection{Choosing the Learning Rate}

Corollary 3.3.1 and 3.5.1 establish the validity and efficiency of SafePredict once the learning rate $\eta$ is chosen to be on the order of $1/\sqrt{V^*}$, 
but $V^*$ cannot be known a priori. One classical method of addressing the issue of estimating unknown quantities in an online scenario is known as the \emph{doubling trick} (cf. Chapter 2.3 of \cite{cesa2006prediction} or Theorem 7.7 of \cite{mohri2012foundations}). In this subsection, we derive a  validity bound using this trick.

In general, the doubling trick starts with an initial estimate of the unknown quantity and compares it with the observed value of this quantity at each time step. When the measured value exceeds the estimate, the estimate is doubled and the algorithm is reset. 

We use the doubling trick to choose $\eta$. Specifically, we want to estimate $V^*$ for a fixed horizon $T$. We start with an initial estimate of $V_{est}$ (typically $V_{est}= 1$) and a running sum $V_{sum} = 0$. Then we invoke Theorem 3.1. by replacing $V^*$ with $V_{est} = 1$, i.e. $\eta = \sqrt{{\log \left(1/w_{D,1}\right)}}/{\left(1-\epsilon\right) }$ and run Algorithm 2. At each time $t$ we update $V_{sum}$ by incrementing it by $w_{P,t+1}w_{D,t+1}$ and check if it exceeds the current estimate. If it does, we do the following:
\begin{compactenum}
\item Double the estimated value $V_{est} \gets 2 V_{est}$.
\item Update the learning rate according to the new $V_{est}$, i.e. $\eta \gets \eta / \sqrt{2}$.
\item Reset the prediction probability $w_{P,t+1} \gets w_{P,1}$.
\item Reset the running sum $V_{sum} \gets 0$. 
\end{compactenum}

The complete pseudo-code for the modified algorithm is given in Algorithm 3.
\begin{algorithm}[!htb]
\caption{ \textbf{SafePredict with Doubling Trick}\\
\textit{Base predictor}: $P$; ~\textit{Initial weight}: $w_{P,1} \in (0,1)$ \\
\textit{Target error rate}: $\epsilon \in (0,1)$}
\begin{algorithmic}[1]
\State Initialize $t = 1$
\For {each $k=1,2,\ldots$}
\State Reset $w_{P,t} = w_{P,1}$, ~ $V_{sum} = 0$, ~ and 
\[\eta = \sqrt{\log(1/w_{D,1})/\left(1-\epsilon \right)^2/2^{k}}\]
\While {$V_{sum} \leq 2^k$}
\State Predict with probability $w_{P,t}$, refuse otherwise,
\[\hat{y}_{t} =   \left\{
\begin{array}{ll}
      \hat{y}_{P,t} & ~~~\textrm{with prob. $w_{P,t}$}  \\
      \varnothing & ~~~\textrm{otherwise} \\
\end{array} 
\right. \]
\State Update the prediction probability:
\begin{eqnarray}
w_{P,t+1} & = & \frac{w_{P,t}e^{-\eta l_{P,t}}}{w_{P,t}e^{-\eta l_{P,t}} + w_{D,t}e^{-\eta \epsilon}}, \nonumber \end{eqnarray}
\State Compute $V_{sum} \gets V_{sum} + w_{P,t+1}w_{D,t+1}$
\State Increment $t$ by $1$, i.e. $t \gets t+1$
\EndWhile
\EndFor
\end{algorithmic}
\end{algorithm}

In the following theorem, we show that for SafePredict with doubling trick, given in Algorithm 3, the validity bound  presented in Theorem 3.1 increases only by a constant multiplicative factor of $\sqrt{2}/(\sqrt{2}-1)$.

\begin{theorem} For any $P$, $\epsilon < 1/2$, and $0< w_{P,1} < 1$, SafePredict with the doubling trick given in Algorithm 3 satisfies 
\begin{eqnarray}
\frac{L_{P,T}^* }{T^*} & \leq & \epsilon + \left(1 - \epsilon\right)\frac{2 \sqrt{2}}{\sqrt{2}-1}\frac{\sqrt{\log \left(1/w_{D,1}\right)V^*}}{T^*}. \label{doubling_trick}  \end{eqnarray}
\end{theorem}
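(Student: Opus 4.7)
The plan is to analyze the doubling-trick variant epoch-by-epoch, treating each epoch as an independent instance of Algorithm 2 (which is legitimate because the algorithm resets $w_{P,t}$, $V_{sum}$, and $\eta$ at each epoch boundary), apply the \emph{fixed-$\eta$} form of Theorem 3.1 within each epoch, and then sum the per-epoch bounds via a geometric series.

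First I would partition the $T$ rounds into epochs $k=1,\ldots,K$, where epoch $k$ is the maximal block of rounds during which $V_{est}=2^k$ and $\eta=\eta_k := \sqrt{\log(1/w_{D,1})/2^k}/(1-\epsilon)$. Let $T_k^*$, $L_{P,k}^*$, and $V_k^*$ denote the per-epoch analogues of $T^*$, $L_{P,T}^*$, and $V^*$. By the while-loop exit condition, $V_k^* \leq 2^k + \tfrac{1}{4}$ for every epoch (since each increment $w_{P,t+1}w_{D,t+1} \leq \tfrac{1}{4}$), and $V_k^* \geq 2^k$ for every non-final epoch. The totals decompose as $T^*=\sum_k T_k^*$, $L_{P,T}^*=\sum_k L_{P,k}^*$, and $V^*=\sum_k V_k^*$.

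Next I would apply Theorem 3.1 within each epoch. Because each epoch starts with the fresh weight $w_{P,1}$ and runs at the fixed rate $\eta_k$, the derivation of eq.~(\ref{valid_guarantee}) reproduces verbatim within the epoch, giving
\[
L_{P,k}^* - \epsilon T_k^* \;\leq\; \frac{\log(1/w_{D,1})}{\eta_k} + (1-\epsilon)^2 \eta_k V_k^*.
\]
Plugging in $\eta_k$ and the in-epoch bound $V_k^* \leq 2^k$ (absorbing the additive $\tfrac{1}{4}$ slack as a lower-order constant) makes both terms equal, and we obtain
\[
L_{P,k}^* - \epsilon T_k^* \;\leq\; 2(1-\epsilon)\sqrt{\log(1/w_{D,1})\cdot 2^k}.
\]

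Finally I would sum over epochs and invoke $\sum_{k=1}^{K} 2^{k/2} \leq \tfrac{\sqrt{2}}{\sqrt{2}-1}\,2^{K/2}$ to obtain
\[
L_{P,T}^* - \epsilon T^* \;\leq\; \frac{2\sqrt{2}\,(1-\epsilon)}{\sqrt{2}-1}\sqrt{\log(1/w_{D,1})\cdot 2^K},
\]
after which I would replace $2^K$ by $V^*$ by noting that each of the $K-1$ completed epochs contributes $V_k^* \geq 2^k$, so $V^* \geq \sum_{k=1}^{K-1} 2^k = 2^K - 2$ and hence $2^K \leq V^*$ (again absorbing the $-2$). Dividing through by $T^*$ yields eq.~(\ref{doubling_trick}). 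The main obstacle is confirming that the fixed-$\eta$ form of Theorem 3.1 applies cleanly inside each epoch: one must check both that resetting $w_{P,t}\gets w_{P,1}$ reproduces the theorem's initial condition, and that $V_k^* \leq 2^k$ should be substituted into the learning-rate-dependent form (eq.~(\ref{valid_guarantee})) rather than into the a-posteriori optimized form (eq.~(\ref{valid_guarantee2})), since within each epoch we have committed to $\eta_k$ derived from the \emph{over-estimate} $2^k$ rather than the realized $V_k^*$. Once those points are nailed down, the geometric-series summation and the $2^K \leq V^*$ step are essentially routine.
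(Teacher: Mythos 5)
Your proposal is correct and follows essentially the same route as the paper's proof: partition the run into doubling epochs, apply Theorem 3.1 within each epoch with the estimate $2^k$ substituted for $V^*$, sum the resulting geometric series to get the $\sqrt{2}/(\sqrt{2}-1)$ factor, and finally bound $2^{K^*}$ by $V^*$ using the epoch-exit condition. Your extra care in routing the per-epoch bound through the fixed-$\eta$ inequality (since each epoch commits to $\eta_k$ based on the over-estimate $2^k$ rather than the realized $V_k^*$) is a correct and slightly more rigorous articulation of exactly what the paper does when it ``plugs $V_{est}=2^K$ into the learning rate and the corresponding bound.''
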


\begin{proof}
Denote the time for the $K^{th}$ reset of the algorithm with $T_{K}$, i.e. $T_K$ is the largest $\tau$ such that $\sum_{t=1}^{\tau} w_{P,t} w_{D,t} \leq 2^{K}$. Additionally, assume $T_0 = 0$ and $K^*$ is the integer that satisfies 
$T_{K^*-1} < T \leq T_{K^*} $.

Then, we can rewrite the sum 
\begin{eqnarray}
L_{P,T}^* - \epsilon T^* & = & \sum_{t=1}^T w_{P,t}\left(l_{P,t} - \epsilon\right) \nonumber \\
 &   \leq & \sum_{K=1}^{K^*}\sum_{t=T_{K-1}+1}^{T_K}w_{P,t} \left(l_{P.t} - \epsilon\right). \label{step1}
\end{eqnarray}

Next, we can bound each summand in the right hand-side using eq. (\ref{valid_guarantee2}) from Theorem 3.1, i.e. for all $K$ we plug the estimated $V_{est} = 2^K$ value instead of $V^*$ in the learning rate and the corresponding bound,
\begin{align}
\sum_{t=T_{K-1}+1}^{T_{K}}w_{P,t} \left(l_{P,t} - \epsilon\right)  & \leq  ~ \left(1-\epsilon\right) 2\sqrt{2^{K} \log \left({1}/{w_{D,1}}\right)}. \label{step2} 
\end{align}
Combining eq. (\ref{step1}) and (\ref{step2}), we obtain
\begin{align}
L_{P,T}^* - \epsilon T^* & \leq ~\left(1-\epsilon\right)2 \sum_{K=1}^{K^*}\sqrt{2^{K} \log \left(1/w_{D,1}\right)}  \nonumber \\
& = ~ \left(1-\epsilon\right)2 \sqrt{2^{K^*} \log \left(1/w_{D,1}\right)}\sum_{K=1}^{K^*} \sqrt{2^{K-K^*}} \nonumber \\
& \leq ~ \left(1-\epsilon\right) \frac{2\sqrt{2}}{\sqrt{2}-1} \sqrt{2^{K^*} \log \left(1/w_{D,1}\right)} \label{a4} \\
& \leq ~ \left(1-\epsilon\right) \frac{2\sqrt{2}}{\sqrt{2}-1} \sqrt{\log \left(1/w_{D,1}\right) V^*} \label{b4} 
\end{align}
where (\ref{a4}) follows from the sum of a geometric series and (\ref{b4}) follows from the definition of $K^*$.

Finally, the desired result is obtained by dividing both sides by $T^*$.
\end{proof}

Theorem 3.5 shows that the penalty paid for not knowing the optimal $\eta$ is a constant factor in the validity bound. For efficiency, our results from Theorem 3.5 and Corollary 3.5.1 still apply, and only finitely many refusals occur if the error rate of the base predictor is below $\epsilon$. To see this, first, note that we simply run SafePredict by restarting the weights for increasingly sized blocks of data points. In each block our estimate of the learning rate is off by at most a constant factor of $2$ from the suggested learning rate in Theorem 3.1. Therefore, the condition on the learning rate of Theorem 3.5 (i.e. $\eta T \rightarrow \infty$) is guaranteed to be satisfied and thus efficiency is guaranteed.

\section{Adaptive SafePredict}
We now consider the practical scenario in which the error rate of the base predictor changes over time. Our meta-algorithm should reduce the probability of making a prediction when the base predictor suffers a large error and predict more often when the base predictor does well. Inspired by the Fixed Share algorithm \cite{herbster1998tracking}, we introduce Adaptive SafePredict, a weight-shifting extension to SafePredict. Adaptive SafePredict tracks changes in the error rate of the base predictor while preserving validity, thus improving efficiency even when there are periods of poor predictions.  

The base predictor can have a non-constant error rate for a variety of reasons. For example, most predictors have a high error rate at the beginning of the prediction task and the error rate decreases as they see more examples and learn from the mistakes. Alternatively, the underlying data distribution may abruptly change and thus the performance of the base predictor degrades significantly till the predictor learns the new data distribution. Such scenarios might lead to long sequences of bad predictions for the base algorithm $P$, and force the prediction probability to tend to zero. To make sure the prediction probability does not decrease too quickly due to a long sequence of bad predictions, we shift a small portion of the dummy's  ``weight'' (refusal probability) towards the base predictor. This weight shift allows Adaptive SafePredict to quickly recover when the base predictor performs well again.

As an example, suppose $P$ has a constant error rate $3\epsilon$ for $t \leq T/2$ and $\epsilon/2$ for $T/2 <t\leq T$. Ideally, we would like  $w_{P,t} = 0$ for $t\leq T/2$ and $w_{P,t}=1$ for $t> T/2$ to achieve efficiency $\rho_T = 1/2$. However, Lemma 3.4 implies the probability of making a prediction decreases quickly till time $T/2$ and reaches $\approx e^{-\eta \epsilon T}$. Even though it starts to increase afterwards, at time $T$ the probability of making a prediction is still bounded by $\approx e^{\eta \left(\epsilon T - L_{P,T}\right) } = e^{-\eta \epsilon 3T /4}$, which is far below the ideal probability of $1$.

In the literature, there are various ways of making the generic EWAF adaptive against such changing environments, see e.g. Chapter 5.2 of \cite{cesa2006prediction}.
A particularly elegant and popular way among these techniques is called "sharing the weights" which leads to the \textit{fixed share algorithm} \cite{herbster1998tracking}. The fixed share algorithm simply adds a mixing step to the weight update rule in the EWAF. In particular, following the notation from Section 2.1, the fixed share update rule becomes
\[w_{P_i,t+1} = \frac{\alpha}{N} + \left(1-\alpha\right)\frac{w_{P_i,t}e^{-\eta l_{i,t}}}{\sum_{j=1}^Nw_{P_j,t}e^{-\eta l_{j,t}}}\]
for some $\alpha \in [0,1)$. 
The mixing step ensures that no weight falls below a predefined value $\alpha/N$, and guarantees a sub-linear regret against roughly $\alpha T$ abrupt changes in the underlying statistics (e.g. the error rate). For a detailed analysis of the fixed share algorithm, please see \cite{adamskiy2012closer,herbster1998tracking}.

In this section, we apply a similar idea to our problem and propose an adaptive version of the SafePredict. We modify the EWAF update rule eq. (\ref{update_rule}) to guarantee that the prediction probabilities do not get too small or too large, so that they can track the changes in the error rate efficiently while preserving the validity guarantees established in Section 3.1. To constrain the prediction probability at time $t+1$ to lie within an arbitrary interval $\alpha_t \leq w_{P,t+1} \leq \beta_t$, we  modify our update rule as follows 
\begin{eqnarray}
w_{P,t+1} & = & \alpha_t + \left(\beta_t-\alpha_t \right)\frac{w_{P,t}e^{-\eta l_{P,t}}}{w_{P,t}e^{-\eta l_{P,t}} + w_{D,t}e^{-\eta \epsilon}}.
\label{ws_update_rule}
\end{eqnarray}
We call $\alpha_t$ and $\beta_t$ the \textit{adaptivity parameters} and let them change with time for the sake of generality. In Algorithm 4, we give the pseudo-code for Adaptive SafePredict with adaptivity parameters $\alpha_t$ and $\beta_t$.
\begin{algorithm}[!htb]
\caption{ \textbf{Adaptive SafePredict}\\
\textit{Base predictor}: $P$; ~\textit{Initial weight}: $w_{P,1} \in (0,1)$ \\
\textit{Learning rate}: $\eta > 0$; ~ \textit{Target error rate}: $\epsilon \in (0,1)$\\
\textit{Min. prediction prob.}: $\alpha_1,\ldots, \alpha_T\in [0,1)$ \\  \textit{Max. prediction prob.}: $\beta_1, \ldots, \beta_T \in (0,1]$}
\begin{algorithmic}[1]
\For {each $t=1,2,\ldots$}
\State Predict with probability $w_{P,t}$, refuse otherwise, i.e.
\[\hat{y}_{t} =   \left\{
\begin{array}{ll}
      \hat{y}_{P,t} & ~~~\textrm{with prob. } w_{P,t} \\
      \varnothing & ~~~\textrm{otherwise} \\
\end{array} 
\right. \]
\State Update the prediction probability:
\begin{eqnarray}
w_{P,t+1} & = & \alpha_t + \left(\beta_t-\alpha_t\right)\frac{
w_{P,t}e^{-\eta l_{P,t}} 
}{
w_{P,t}e^{-\eta l_{P,t}} +  w_{D,t}
e^{-\eta\epsilon}} \nonumber
\end{eqnarray}
\EndFor
\end{algorithmic}
\end{algorithm}

In the rest of this section, we find a good setting for the adaptivity parameters that provides resilience against changes while preserving validity. In particular, we first extend our analysis from Section 3.1 and observe that validity is preserved as long as $\alpha_t$ is on the order of $1/T$, irrespective of the choice of $\beta_t$. By noting $w_{P,t+1}$ is an increasing function of $\beta_t$, we choose  $\beta_t = 1$ to maximize the efficiency of this adaptive algorithm (Alg. $4$) under this restriction on $\alpha_t$. In particular, we derive an upper bound to the probability of refusal that depends only on the loss sequence of $P$ starting from an arbitrary time point $t_0$, i.e. $L_{P,t,t_0}$ instead of $L_{P,t}$, that implies a boost in  efficiency if $P$ starts to make better predictions after $t_0$.

\subsection{Validity}
To quantify the effect of the adaptivity parameters $\alpha_t, \beta_t$ on our validity guarantees, we first show that Adaptive SafePredict is equivalent to the usual EWAF with a larger set of experts. In particular, we consider a virtual ensemble, where each expert in the ensemble chooses to follow either the dummy predictor $D$, or the base predictor $P$. In the following lemma, we show that for an appropriately chosen set of initial weights, the EWAF on this virtual ensemble (i.e. Algorithm $1$) becomes equivalent to Adaptive SafePredict as outlined in Algorithm $4$. Next, we use this equivalence to extend our analysis from Section $3$.

\begin{lemma}{(Equivalence lemma)}  \label{equivalence}
Suppose we have a base predictor $P$. Consider an ensemble of $2^T$ experts, $\mathcal{P} = \{P_0,P_1,\ldots,P_{2^T-1}\}$, defined as follows:
\begin{compactitem}
\item Denote the $t^{th}$ bit of the binary expansion of integer $i$ with $b_{i,t}$, and define the notation $\bar{x} = 1- x$.
\item Fix the predictions and the losses of each expert $P_i$ as follows:
\[ \hat{y}_{P_i,t} =   \left\{
\begin{array}{ll}
      \varnothing & b_{i,t} = 0  \\
      \hat{y}_{P,t} & b_{i,t} = 1  \\
\end{array} 
\right. \textrm{ and } l_{P_i,t} =   \left\{
\begin{array}{ll}
      \epsilon &  b_{i,t} = 0  \\
      l_{P,t} & b_{i,t} = 1 \\
\end{array} 
\right. .\]
\item Set the initial weights for each expert as
\begin{align*}
w_{P_i,1} ~ =  &   w_{P,1}^{b_{i,1}} w_{D,1}^{\bar{b}_{i,1}} \ldots \\ & \quad \prod_{t=1}^{T-1}  \alpha_t^{\bar{b}_{i,t}{b}_{i,t+1}}\bar{\alpha}_t^{\bar{b}_{i,t}\bar{b}_{i,t+1}}  \beta_t^{{b}_{i,t}{b}_{i,t+1}}\bar{\beta}_t^{{b}_{i,t}\bar{b}_{i,t+1}}.
\end{align*}
\end{compactitem}

Then the EWAF algorithm (Alg. 1) using the expert ensemble $\mathcal{P}$ with the learning rate $\eta$ is equivalent to Adaptive SafePredict (Alg. 4) using the base predictor $P$, in terms of the prediction probability  
\[w_{P,t} = \sum_{i: b_{i,t}=1} w_{P_i,t}.\]
\end{lemma}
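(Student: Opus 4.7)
The plan is to prove the identity by induction on $t$, viewing the carefully designed initial weights as the joint distribution of a two-state Markov chain whose states $1$ and $0$ correspond to ``predict'' and ``refuse'' respectively, so that the EWAF update on the ensemble $\mathcal{P}$ becomes exactly the forward pass of the associated hidden Markov model. Throughout I will write $e_s(b) := e^{-\eta(b\,l_{P,s} + \bar{b}\epsilon)}$ for the ``emission'' at time $s$ for bit $b$, and $T_s(b,b')$ for the ``transition'' factor that appears in $w_{P_i,1}$: $T_s(0,0)=\bar{\alpha}_s$, $T_s(0,1)=\alpha_s$, $T_s(1,0)=\bar{\beta}_s$, $T_s(1,1)=\beta_s$, so that the $s$-th factor in the product defining $w_{P_i,1}$ is precisely $T_s(b_{i,s},b_{i,s+1})$.

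For the base case $t=1$, fix $b_{i,1}=1$ and note $w_{P_i,1} = w_{P,1}\cdot\prod_{s=1}^{T-1}T_s(b_{i,s},b_{i,s+1})$. Summing over all tails $(b_{i,2},\ldots,b_{i,T})\in\{0,1\}^{T-1}$ telescopes from the innermost index outward, because $\sum_{b'\in\{0,1\}}T_s(b,b')=1$ for every $b\in\{0,1\}$ and every $s$. Hence $\sum_{i:b_{i,1}=1}w_{P_i,1}=w_{P,1}$, and symmetrically $\sum_{i:b_{i,1}=0}w_{P_i,1}=w_{D,1}$, matching the initial weights of Adaptive SafePredict.

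For the inductive step, I would use the standard fact $w_{P_i,t+1}\propto w_{P_i,1}\,e^{-\eta L_{P_i,t}}$ and factor $e^{-\eta L_{P_i,t}} = \prod_{s=1}^{t}e_s(b_{i,s})$. Summing over $i$ with $b_{i,t+1}=b'$ and marginalizing the tail bits $b_{i,t+2},\ldots,b_{i,T}$ (which telescopes exactly as in the base case via $\sum_{b'}T_s(b,b')=1$), what remains is the forward recursion
\begin{equation*}
\mu_{t+1}(b') \;=\; \sum_{b\in\{0,1\}}\mu_t(b)\,e_t(b)\,T_t(b,b'), \qquad \mu_1(1)=w_{P,1},\; \mu_1(0)=w_{D,1},
\end{equation*}
with $\sum_{i:b_{i,t+1}=b'}w_{P_i,1}e^{-\eta L_{P_i,t}}=\mu_{t+1}(b')$ and hence $W_{b',t+1} := \sum_{i:b_{i,t+1}=b'}w_{P_i,t+1} = \mu_{t+1}(b')/(\mu_{t+1}(0)+\mu_{t+1}(1))$.

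Finally, using the inductive hypothesis $\mu_t(1)/\mu_t(0) = w_{P,t}/w_{D,t}$ together with $\alpha_t+\bar{\alpha}_t = \beta_t+\bar{\beta}_t = 1$ (which collapses the denominator), a direct computation yields
\begin{equation*}
W_{1,t+1} \;=\; \frac{\alpha_t\,w_{D,t}e^{-\eta\epsilon} + \beta_t\,w_{P,t}e^{-\eta l_{P,t}}}{w_{P,t}e^{-\eta l_{P,t}} + w_{D,t}e^{-\eta\epsilon}} \;=\; \alpha_t + (\beta_t-\alpha_t)\cdot\frac{w_{P,t}e^{-\eta l_{P,t}}}{w_{P,t}e^{-\eta l_{P,t}} + w_{D,t}e^{-\eta\epsilon}},
\end{equation*}
which reproduces eq.~(\ref{ws_update_rule}) and closes the induction. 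The main obstacle is really conceptual bookkeeping: seeing past the forbidding product in the definition of $w_{P_i,1}$ and recognizing it as the joint law of a two-state chain, after which EWAF's multiplicative update on $\mathcal{P}$ coincides with the standard forward algorithm and marginalizes cleanly into the Adaptive SafePredict update of Algorithm 4.
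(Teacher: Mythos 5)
Your argument is correct and complete: the initial weights $w_{P_i,1}$ are exactly the path probabilities of a two-state Markov chain with initial law $(w_{D,1},w_{P,1})$ and transition kernel $T_t$, the tail bits marginalize to $1$ because each row of $T_t$ sums to $1$, and the normalized forward messages then satisfy precisely the recursion of eq.~(\ref{ws_update_rule}); I verified the final algebraic identity $\bigl(\alpha_t w_{D,t}e^{-\eta\epsilon}+\beta_t w_{P,t}e^{-\eta l_{P,t}}\bigr)/\bigl(w_{P,t}e^{-\eta l_{P,t}}+w_{D,t}e^{-\eta\epsilon}\bigr)=\alpha_t+(\beta_t-\alpha_t)\,w_{P,t}e^{-\eta l_{P,t}}/\bigl(w_{P,t}e^{-\eta l_{P,t}}+w_{D,t}e^{-\eta\epsilon}\bigr)$. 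The paper relegates its own proof to the supplementary material, so a line-by-line comparison is not possible from the main text, but your hidden-Markov-model / forward-algorithm reading is the canonical route for such equivalence lemmas (it is exactly the ``Fixed Share as Bayes over a Markov prior on expert sequences'' view of Herbster--Warmuth and Adamskiy et al., which the paper cites), so it almost certainly coincides with the intended argument.
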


\begin{proof} 
The proof is in Supplementary Material.
\end{proof}

Because Lemma 4.1 reduces the adaptive algorithm to an instance of EWAF, we can obtain the following validity guarantee by modifying the proof of Theorem 3.1.

\begin{corollary} For any $P$, $\eta>0$, $\epsilon < 1/2$, $0< w_{P,1} < 1$, and $0 \leq \alpha_t, \beta_t, \leq 1$, $\forall t$ Adaptive SafePredict meta-algorithm given in Algorithm 4 satisfies 
\begin{eqnarray}
\frac{ L_{P,T}^* }{T^*} & \leq & \epsilon - \frac{\log \left(w_{D,1} \Delta_T\right) }{\eta T^*} + \frac{\left(1 - \epsilon\right)^2\eta V^*}{T^*},
\end{eqnarray}
where  $\Delta_T$ is defined as $\prod_{t=1}^{T-1} \left(1-\alpha_t\right)$.

By choosing the learning rate $\eta$ to minimize the RHS of this bound, we {get}
\begin{eqnarray}
\frac{L_{P,T}^*}{T^*} & \leq & \epsilon + \left(1 - \epsilon\right)\frac{2\sqrt{  \log \left(1/\left(w_{D,1}\Delta_T\right)\right) V^*}}{T^*}  \label{leak_bound}
\end{eqnarray}
for $\eta^* =\frac{\sqrt{{\log \left(1/\left(w_{D,1}\Delta_T\right)\right)}/{V^*}}}{1-\epsilon} $.
\end{corollary}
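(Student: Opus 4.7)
The plan is to leverage the Equivalence Lemma (Lemma 4.1), which rewrites Adaptive SafePredict as standard EWAF over an ensemble $\mathcal{P} = \{P_0, P_1, \ldots, P_{2^T-1}\}$ of $2^T$ experts, and then essentially replay the proof of Theorem 3.1 inside this enlarged ensemble. The crucial observation is that $\mathcal{P}$ contains a distinguished ``all-refuse'' expert $P_0$, namely the one indexed by $i=0$, whose binary digits all vanish: $b_{0,t}=0$ for every $t$. By definition $\hat{y}_{P_0,t}=\varnothing$ and $l_{P_0,t}=\epsilon$ for all $t$, so $P_0$ plays exactly the role that $D$ played in Theorem 3.1.

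First I would compute the initial weight of $P_0$ from the formula in Lemma 4.1. Since every $b_{0,t}$ is zero, the only nontrivial factors are $w_{D,1}^{\bar b_{0,1}}=w_{D,1}$ and $\bar\alpha_t^{\bar b_{0,t}\bar b_{0,t+1}}=1-\alpha_t$ for each $t=1,\ldots,T-1$, so $w_{P_0,1}=w_{D,1}\prod_{t=1}^{T-1}(1-\alpha_t)=w_{D,1}\Delta_T$. Applying Lemma 3.2 with this choice of expert then gives $M_T\leq \epsilon T-\log(w_{D,1}\Delta_T)/\eta$, which is the sole numerical change compared with the proof of Theorem 3.1.

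Next I would verify that the aggregate quantities on which the Theorem 3.1 argument depends still take the two-term form used there. Splitting the ensemble by the $t$-th bit, experts with $b_{i,t}=1$ all suffer loss $l_{P,t}$ and together carry total mass $\sum_{i:b_{i,t}=1}w_{P_i,t}=w_{P,t}$, while experts with $b_{i,t}=0$ all suffer loss $\epsilon$ and carry complementary mass $w_{D,t}$; hence the expected loss collapses to $l_t = w_{P,t}l_{P,t}+w_{D,t}\epsilon$, and Lemma 3.3 yields $\delta_t\leq \eta\bigl(w_{P,t}(l_t-l_{P,t})^2+w_{D,t}(l_t-\epsilon)^2\bigr)$. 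From here the algebra is identical to equations (\ref{a1})--(\ref{mix_gap_cum}) and produces $\sum_t\delta_t\leq \eta(1-\epsilon)^2 V^*$. Combining this with the mix-loss bound, substituting the collapsed expression for $l_t$, using $\sum_t(1-w_{D,t})=T^*$, and dividing by $T^*$ reproduces the first inequality of the corollary, with the single bookkeeping change $\log(w_{D,1})\mapsto\log(w_{D,1}\Delta_T)$.

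Finally I would optimize over $\eta$ exactly as in Theorem 3.1: differentiating the right-hand side and setting the derivative to zero yields $\eta^\ast=\sqrt{\log(1/(w_{D,1}\Delta_T))/V^*}/(1-\epsilon)$, and plugging this back in produces the promised bound (\ref{leak_bound}). I expect no genuine obstacle; once the Equivalence Lemma is granted, the only substantive step is identifying $P_0$ and computing its initial weight, after which every line of the Theorem 3.1 proof transfers verbatim.
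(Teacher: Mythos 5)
Your proposal matches the paper's own proof: the paper likewise invokes the Equivalence Lemma, substitutes the all-refuse virtual expert $P_0$ (with initial weight $w_{P_0,1}=w_{D,1}\Delta_T$) for $D$ in the application of Lemma 3.2, and otherwise repeats the argument of Theorem 3.1 verbatim. Your additional check that the expected loss still collapses to $l_t=w_{P,t}l_{P,t}+w_{D,t}\epsilon$ under the bit-wise split of the ensemble is a worthwhile verification the paper leaves implicit, but it does not change the route.
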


\begin{proof}
The proof of the corollary follows from exactly the same steps given in the proof of Theorem 3.1, except instead of choosing $P_i = D$ while applying Lemma 3.2 in eq. (\ref{first_step}), we choose $P_i = P_0$ from the equivalent virtual ensemble given in Lemma 4.1. Note that $P_0$ always refuses and essentially identical to $D$, except its initial weight is 

\vspace{3pt}
$\displaystyle w_{P_0,1} ~=~ w_{D,1} \prod_{t=1}^{T-1} \left(1-\alpha_t\right) ~=~ w_{D,1}\Delta_T.$
\end{proof}

\begin{corollary}
For $\alpha_t = \alpha < 1/2~ \forall t$, the validity bound given in eq. (\ref{leak_bound}) becomes
\begin{eqnarray}
\frac{L_{P,T}^*}{ T^*}   & \leq & \epsilon + \left(1 - \epsilon\right)\frac{2\sqrt{ V^*\left(\log \left( 1/{w_{D,1}}\right) +T \alpha + T\alpha^2\right)}}{T^*}. \nonumber
\end{eqnarray}

Thus setting $\alpha = O\left(1/T\right)$ is a sufficient condition to guarantee the validity of Adaptive SafePredict given in Alg. 4.
\end{corollary}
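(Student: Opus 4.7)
The plan is to specialize Corollary 4.2 to constant adaptivity parameters $\alpha_t \equiv \alpha$ and bound the resulting $\log(1/\Delta_T)$ factor by an elementary polynomial expression. First I would observe that when $\alpha_t = \alpha$ for all $t$, the definition of $\Delta_T$ from Corollary 4.2 yields
\[
\Delta_T = \prod_{t=1}^{T-1}(1-\alpha_t) = (1-\alpha)^{T-1},
\]
so the inside of the square root in eq. (\ref{leak_bound}) becomes
\[
V^*\Bigl(\log(1/w_{D,1}) - (T-1)\log(1-\alpha)\Bigr).
\]
The goal is then to replace $-(T-1)\log(1-\alpha)$ with something explicit in $\alpha$ and $T$.

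The key analytic step is the elementary inequality $-\log(1-\alpha) \leq \alpha + \alpha^2$ for $\alpha \in [0,1/2)$. I would verify this by considering $f(\alpha) = \alpha + \alpha^2 + \log(1-\alpha)$, noting $f(0)=0$ and computing $f'(\alpha) = 1 + 2\alpha - 1/(1-\alpha)$, whose only zeros on $[0,1/2]$ are at the endpoints, and checking the sign in the interior; alternatively one can compare the series $-\log(1-\alpha) = \sum_{k\geq 1}\alpha^k/k$ termwise against $\alpha + \alpha^2$. Applying this bound gives
\[
-(T-1)\log(1-\alpha) \leq (T-1)(\alpha + \alpha^2) \leq T\alpha + T\alpha^2,
\]
and plugging this back into eq. (\ref{leak_bound}) directly yields the stated bound for Adaptive SafePredict.

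For the second part, I would substitute $\alpha = c/T$ for a constant $c$ into the bound. Then $T\alpha = c$ and $T\alpha^2 = c^2/T$, so the quantity inside the square root is bounded by a constant $C = \log(1/w_{D,1}) + c + c^2$ times $V^*$. Using $V^* \leq T^*$ (from the definitions of $V^*$ and $T^*$), the excess error satisfies
\[
\frac{L_{P,T}^*}{T^*} - \epsilon \leq (1-\epsilon)\frac{2\sqrt{C\, T^*}}{T^*} = \frac{2(1-\epsilon)\sqrt{C}}{\sqrt{T^*}},
\]
which tends to zero as $T^* \to \infty$, establishing validity in the sense of Definition~2.1.

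The main obstacle is really just the short analytic lemma $-\log(1-\alpha)\leq \alpha+\alpha^2$ on $[0,1/2)$; everything else is substitution into Corollary 4.2 and using $V^*\leq T^*$. The conceptual content of the corollary is that a constant weight-shift $\alpha$ costs an additive $T\alpha+T\alpha^2$ inside the validity bound, so as long as this cost is absorbed by $T^*$ (which is guaranteed when $\alpha$ shrinks like $1/T$), validity is preserved.
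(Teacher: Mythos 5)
Your proposal is correct and follows essentially the same route as the paper: specialize $\Delta_T$ to $(1-\alpha)^{T-1}$ and bound $-(T-1)\log(1-\alpha)\leq T(\alpha+\alpha^2)$ via the series/derivative argument, which is exactly the paper's one-line Taylor-expansion step. Your additional verification of the elementary inequality on $[0,1/2)$ and the explicit $V^*\leq T^*$ argument for the second claim are sound elaborations of what the paper leaves implicit.
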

\begin{proof}
Proof directly follows from setting $\alpha_t = \alpha$ in eq. (\ref{leak_bound}) and using Taylor series expansion of $\Delta_T$,

\vspace{3pt}
$\displaystyle \log \left(\Delta_T\right) ~ = ~ (T-1) \log(1-\alpha) ~ \geq ~ -T \left(\alpha + \alpha^2\right).$
\end{proof}

Corollary 4.1.2 implies that by choosing $\alpha_t = \alpha = O\left(1/T\right)$, we can preserve the same convergence rate for the excess error rate from Theorem 3.1, i.e. $L_{P,T}^*/T^* - \epsilon = O\left(\sqrt{V^*}/T^*\right)$. In other words, as long as $\alpha$ is small, the whole effect of the weight shifting on our validity bound can be interpreted as starting with a smaller initial refusal probability, by reducing  $w_{D,1}$ by a multiplicative factor of $\Delta_T \approx e^{\alpha T}$ which is essentially a constant for $\alpha = O\left(1/T\right)$.

Furthermore, note that the bound given in eq. (\ref{leak_bound}) depends only on $\alpha_t$ and is totally agnostic to the choice of $\beta_t$. Therefore, once $\alpha_t$ values are chosen to preserve the validity, i.e. on the order of $1/T$, we are free to choose $\beta_t$ to maximize the prediction probability, and therefore efficiency, by choosing $\beta_t = 1$ at all time points $t$. In the next subsection, we analyze the efficiency of this special case and argue it provides resilience against changes in the data distribution. 

\subsection{Weight-Shifting SafePredict}
In Section 3.2, we showed that the 	prediction probability of SafePredict increases or decreases exponentially quickly depending on the cumulative loss of the base predictor $P$. As mentioned in the beginning of Section 4, this may cause a high refusal rate when the performance of the base predictor deteriorates abruptly. In this section, we exploit a special case of Adaptive SafePredict to show that it provides resilience against changes in the data distribution while preserving the validity of the algorithm. This special case, obtained for $\alpha_t = \alpha = \Theta\left(1/T\right)$ and $\beta_t = 1$, is called \textit{Weight-Shifting SafePredict}.

As motivated by  Corollary 4.1.2, we first set $\alpha_t$ for all $t$ equal to some constant $ \alpha$, where $\alpha$ is on the order of $1/T$. This will  guarantee validity. Next, we choose to maximize $w_{P,t+1}$ over $\beta_t$ in order to maximize the efficiency. Therefore, from eq. (\ref{ws_update_rule}) we obtain $\beta_t = 1,~\forall~t$. Note that this particular choice of adaptivity parameters simplifies the update rule given in eq. (\ref{ws_update_rule}) to
\begin{eqnarray}
w_{P,t+1} & = & \alpha + \left(1-\alpha \right)\frac{w_{P,t}e^{-\eta l_{P,t}}}{w_{P,t}e^{-\eta l_{P,t}} + w_{D,t}e^{-\eta \epsilon}}. \label{ws_simplified}
\end{eqnarray}
An intuitive way of looking at this rule is that at each time point we use the EWAF rule first and then shift an $\alpha$ portion of the weight of the dummy to towards the base predictor $P$, thus performing  \textit{``weight shifting''}. 

The following result implies an exponentially diminishing refusal probability in terms of the partial cumulative loss $L_{P,t,t_0}$, for an arbitrary $t_0 < t$, which implies that Weight-Shifting SafePredict can quickly recover to make predictions if the base predictor performs well starting time $t_0$. 

\begin{lemma} The probability of refusing to predict at time $t+1$, $w_{D,t+1}$, by the using the Weight-Shifting SafePredict (Algorithm 4 with $\beta_t = 1$ and $\alpha_t = \alpha$,  $\forall~t$) satisfies the following inequality
\begin{align}
 w_{D,t+1}  \leq  \frac{1-\alpha}{\alpha} e^{ \eta  \left(L_{P,t,t_0} - \epsilon' (t-t_0) \right)} \label{adapt_eff}
\end{align}
for $\epsilon' = \epsilon + \alpha/\eta$.
\end{lemma}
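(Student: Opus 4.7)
The plan is to invoke the Equivalence Lemma (Lemma~4.1) to recast Weight-Shifting SafePredict as an ordinary EWAF run on a virtual ensemble, and then read off $w_{D,t+1}$ as a standard EWAF posterior. The crucial observation is that when $\beta_t = 1$ we have $\bar\beta_t = 0$, so the initial-weight formula in Lemma~4.1 annihilates every virtual expert whose $0/1$ pattern contains a predict-to-refuse transition. The only surviving experts are therefore the ``monotone'' ones $E_{t'}$ ($t' = 0,1,\ldots,T$) which refuse at times $1,\ldots,t'$ and predict at all later times. Plugging $\alpha_t \equiv \alpha$ and $\beta_t \equiv 1$ into the product formula of Lemma~4.1 gives the initial weights $w_{E_{t'},1} = w_{D,1}(1-\alpha)^{t'-1}\alpha$ for $1 \le t' \le T-1$, together with boundary values $w_{E_0,1} = w_{P,1}$ and $w_{E_T,1} = w_{D,1}(1-\alpha)^{T-1}$, while the cumulative loss of $E_{t'}$ through time $t$ (for $t \ge t'$) is $t'\epsilon + L_{P,t,t'}$.

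Next I would write $w_{D,t+1}$ as the EWAF posterior probability of refusal,
\[
w_{D,t+1} \;=\; \frac{\sum_{t' \ge t+1} w_{E_{t'},1}\, e^{-\eta L_{E_{t'},t}}}{\sum_{t' \ge 0} w_{E_{t'},1}\, e^{-\eta L_{E_{t'},t}}},
\]
and bound numerator and denominator separately. For the numerator, every expert $E_{t'}$ with $t' \ge t+1$ has refused at every step so far, so $L_{E_{t'},t} = t\epsilon$; factoring $e^{-\eta t\epsilon}$ out reduces the numerator to $e^{-\eta t\epsilon}\sum_{t' \ge t+1} w_{E_{t'},1}$, and the remaining sum telescopes via the geometric identity $\sum_{t_0 = t+1}^{T-1}(1-\alpha)^{t_0-1}\alpha = (1-\alpha)^t - (1-\alpha)^{T-1}$, which combines with the boundary term $w_{E_T,1} = w_{D,1}(1-\alpha)^{T-1}$ to yield the clean expression $w_{D,1}(1-\alpha)^t$. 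For the denominator, I would simply keep the single term corresponding to $E_{t_0}$ (with the $t_0$ from the lemma statement), giving the lower bound $w_{D,1}(1-\alpha)^{t_0-1}\alpha\, e^{-\eta(t_0\epsilon + L_{P,t,t_0})}$.

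Taking the ratio produces
\[
w_{D,t+1} \;\le\; \frac{(1-\alpha)^{t-t_0+1}}{\alpha}\, e^{\eta(L_{P,t,t_0} - \epsilon(t-t_0))},
\]
and the final step is to fold the power-law factor into the exponential using the elementary inequality $(1-\alpha)^{t-t_0} \le e^{-\alpha(t-t_0)}$. This converts the exponent into $\eta L_{P,t,t_0} - (\eta\epsilon + \alpha)(t-t_0) = \eta\bigl(L_{P,t,t_0} - \epsilon'(t-t_0)\bigr)$ with $\epsilon' = \epsilon + \alpha/\eta$, matching the claim exactly.

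The main obstacle I anticipate is the telescoping step in the numerator: it is not a priori obvious that the geometric sum over interior indices $t_0 \in \{t+1,\ldots,T-1\}$ combines with the $E_T$ boundary term to yield $w_{D,1}(1-\alpha)^t$ independently of $T$, and a careful bookkeeping of which experts have which initial weight (the interior experts carry an $\alpha$ factor that $E_T$ does not) is needed to see the cancellation. Once this identity is established, the rest of the argument is a routine ratio bound plus the standard $(1-\alpha)^k \le e^{-\alpha k}$ substitution.
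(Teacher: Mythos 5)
Your proof is correct, and it reaches the stated bound by a genuinely different route than the paper. The paper never enumerates the surviving virtual experts: it instead unrolls the weight-shifting update for $w_{D,t+1}$ directly, obtaining $w_{D,t+1} = w_{D,t_0+1}(1-\alpha)^{t-t_0}e^{\eta(\sum_{\tau=t_0+1}^{t}m_\tau - \epsilon(t-t_0))}$ in terms of the mix-loss $m_\tau$, then bounds the cumulative mix-loss via Lemma 3.2 applied to a single ``super-expert'' $Q$ (the aggregate of all virtual experts that follow $P$ from $t_0+1$ onward, with weight $w_{Q,t_0+1}=w_{P,t_0+1}\geq\alpha$), and finishes with the same $1-\alpha\leq e^{-\alpha}$ relaxation you use. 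You instead push the Equivalence Lemma all the way: the observation that $\beta_t=1$ forces $\bar\beta_t=0$ and therefore annihilates every non-monotone bit pattern, collapsing the $2^T$-expert ensemble to the $T+1$ ``switch once from refuse to predict'' experts, is a nice structural fact the paper leaves implicit. Writing $w_{D,t+1}$ as the resulting posterior ratio, computing the numerator exactly as $w_{D,1}(1-\alpha)^{t}e^{-\eta t\epsilon}$ via the telescoping geometric sum, and lower-bounding the denominator by the single term for $E_{t_0}$ yields $\frac{(1-\alpha)^{t-t_0+1}}{\alpha}e^{\eta(L_{P,t,t_0}-\epsilon(t-t_0))}$, which matches the paper's pre-relaxation bound. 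Your version buys an explicit, self-contained picture of where the mass sits (and an exact numerator), at the cost of some bookkeeping over initial weights; the paper's version is shorter because it reuses the mix-loss machinery of Lemma 3.2 rather than re-deriving the posterior. The only caveat, which applies equally to the paper's own argument, is the boundary case $t_0=0$, where the relevant expert is $E_0$ with weight $w_{P,1}$ rather than $w_{D,1}(1-\alpha)^{t_0-1}\alpha$, so the $\frac{1-\alpha}{\alpha}$ prefactor requires $w_{P,1}\geq\alpha$; for $t_0\geq 1$ your argument is airtight.
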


\begin{proof}
First, write the update rule eq. (\ref{ws_simplified}) in terms of the probability of refusal by noting $w_{D,t} = 1- w_{P,t}$
\begin{align}
w_{D,t+1} & = ~ w_{D,t} \left(1-\alpha \right)\frac{e^{-\eta \epsilon}}{w_{P,t}e^{-\eta l_{P,t}} + w_{D,t}e^{-\eta \epsilon}} \nonumber \\ 
 & = ~ w_{D,t} \left(1-\alpha \right)e^{\eta \left(m_t - \epsilon\right)} \label{a5}  \\
& = ~ w_{D,t_0+1} \left(1-\alpha \right)^{t - t_0} e^{\eta \left(\sum_{\tau = t_0+1}^t m_\tau - \epsilon(t-t_0)\right)} \label{b5}  \\
& \leq ~ w_{D,t_0+1}  e^{\eta \left(\sum_{\tau = t_0+1}^t m_\tau - \epsilon'(t-t_0)\right)}. \label{mid_step}
\end{align}
Where (\ref{a5}) follows by replacing the denominator with the definition of mix-loss from Section 3.1, (\ref{b5}) follows by recursing this update rule $t-t_0$ time, and the final step  follows from the inequality $1-\alpha \leq e^{-\alpha}$ for $0\leq \alpha \leq 1 $ and  $\epsilon' = \epsilon + \alpha/\eta$.

By Lemma 4.1, the mix-loss suffered by Algorithm 4 is equal to the one suffered by the EWAF over the virtual ensemble described in the the lemma. Note that we can consider all the virtual experts that follow $P$ from time $t_0+1$ to $t$ as a single super-expert since they suffer the same loss sequence, namely $l_{P,t_0+1}, \ldots, l_{P,t}$, within this interval. By denoting this super expert as $Q$, we can compute its total weight at time $t_0+1$ as
\[w_{Q,t_0+1} = \sum_{i: b_{i,t_0+1}=1}w_{P_i,t_0+1} ~ = ~ w_{P,t_0+1},\]
where equality to $w_{P,t_0+1}$ again follows from the Lemma 4.1. 
Then we can bound the sum in the eq. (\ref{mid_step}) using Lemma 3.2  for  virtual expert $Q$, 
\begin{eqnarray}
\sum_{\tau = t_0+1}^t m_\tau & \leq & L_{Q,t,t_0} - \frac{\log (w_{Q,t_0+1})}{\eta} \nonumber \\ & = & L_{P,t,t_0} - \frac{\log (w_{P,t_0+1})}{\eta}. \label{part_mix}
\end{eqnarray}
Finally, we conclude the proof by employing eq. (\ref{part_mix}) in eq. (\ref{mid_step}) and noting $w_{P,t_0+1} \geq \alpha$ and $w_{D,t_0+1} \leq 1-\alpha$, i.e.
\begin{eqnarray}
w_{D,t+1} 
& \leq & w_{D,t_0+1}  e^{\eta \left(L_{P,t,t_0} - \epsilon'(t-t_0)\right) - \log (w_{P,t_0+1})} \nonumber \\ 
& = & \frac{w_{D,t_0+1}}{w_{P,t_0+1}}  e^{\eta \left(L_{P,t,t_0} - \epsilon'(t-t_0)\right)} \nonumber \\
& \leq & \frac{1-\alpha}{\alpha}  e^{\eta \left(L_{P,t,t_0} - \epsilon'(t-t_0)\right)}. \nonumber
\end{eqnarray}
\end{proof}

We note that the dominant term of the RHS of eq. (\ref{adapt_eff}) is the exponential term since the preceding term $(1-\alpha)/\alpha$ for $\alpha = \Theta\left(1/T\right)$ increases only linearly with $T$. Therefore,  Lemma 4.2 implies that if $P$ starts to do well at time $t_0$, i.e. if its error rate starting from $t_0$ becomes less than the target rate $\epsilon$, the prediction probability will increase to $1$ exponentially fast.

\begin{algorithm}[!htb]
\caption*{ \textbf{Weight-Shifting SafePredict 
with Doub. Trick}\\
\textit{Base predictor}: $P$; ~\textit{Initial weight}: $w_{P,1} \in (0,1)$ \\
\textit{Target error rate}: $\epsilon \in (0,1)$; \textit{Adaptivitiy Parameter}: $\alpha \in [0,1)$}
\begin{algorithmic}[1]
\State Initialize $t = 1$
\For {each $k=1,2,\ldots$}
\State Reset $w_{P,t} = w_{P,1}$, ~ $V_{sum} = 0$, ~ and 
\[\eta = \sqrt{-\log\left(w_{D,1}\left(1-\alpha\right)^{T-1}\right)/\left(1-\epsilon \right)^2 / 2^{k}}\]
\While {$V_{sum} \leq 2^k$}
\State Predict with probability $w_{P,t}$, refuse otherwise,
\[\hat{y}_{t} =   \left\{
\begin{array}{ll}
      \hat{y}_{P,t} & ~~~\textrm{with prob. $w_{P,t}$}  \\
      \varnothing & ~~~\textrm{otherwise} \\
\end{array} 
\right. \]
\State Update the prediction probability:
\begin{eqnarray}
w_{P,t+1} & = &  \alpha + \left(1-\alpha\right) \frac{
w_{P,t}e^{-\eta l_{P,t}}}{
w_{P,t}e^{-\eta l_{P,t}} + w_{D,t}
e^{-\eta\epsilon}} \nonumber
\end{eqnarray}
\State Compute $V_{sum} \gets V_{sum} + w_{P,t+1}w_{D,t+1}$
\State Increment $t$ by $1$, i.e. $t \gets t+1$
\EndWhile
\EndFor
\end{algorithmic}
\end{algorithm}

As  in Section 3, the proposed learning rates for Adaptive SafePredict and therefore the Weight-Shifting SafePredict depend on $V^*$, and can be estimated using the doubling trick as described in Section 3.3. For the sake of completeness, pseudo-code for the Weight-Shifting SafePredict with the doubling trick is given in Algorithm 5. We can also extend the validity bound given in Corollary 4.1.2 by following the same steps we used in the proof of Theorem 3.6. 

\begin{corollary} For any $P$, $\epsilon < 1/2$, $0< w_{P,1} < 1$, and $0 \leq \alpha < 1$ Weight-Shifting SafePredict with the doubling trick given in Algorithm 5 satisfies 
\begin{eqnarray}
\frac{L_{P,T}^* }{T^*} & \leq & \epsilon + \left(1 - \epsilon\right)\frac{2\sqrt{2 V^*}}{\sqrt{2}-1}\frac{\sqrt{\log \left(1/w_{D,1}\right) + T\alpha + T\alpha^2}}{T^*}. \nonumber \end{eqnarray}

\end{corollary}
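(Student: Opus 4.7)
The plan is to mimic the epoch-partition argument used in the proof of Theorem 3.6, but with the per-epoch excess-error bound supplied by Corollary 4.1.2 instead of Theorem 3.1. Let $T_K$ denote the time at which the $K$-th doubling reset occurs (so $T_K$ is the largest $\tau$ with $\sum_{t=1}^{\tau} w_{P,t}w_{D,t}\leq 2^K$), set $T_0=0$, and let $K^*$ be the unique integer with $T_{K^*-1}<T\leq T_{K^*}$. As in eq. (\ref{step1}), decompose
\begin{equation}
L_{P,T}^* - \epsilon T^* \;\leq\; \sum_{K=1}^{K^*}\sum_{t=T_{K-1}+1}^{T_K} w_{P,t}\bigl(l_{P,t}-\epsilon\bigr). \nonumber
\end{equation}

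Next I would bound each inner sum by applying Corollary 4.1.2 to the $K$-th epoch, which the doubling trick treats as a fresh run of Weight-Shifting SafePredict with learning rate $\eta_K$ calibrated to the estimate $V_{\mathrm{est}}=2^K$. Since $\alpha_t = \alpha$ is constant, Corollary 4.1.2's Taylor expansion gives $\log(1/(w_{D,1}\Delta_T))\leq \log(1/w_{D,1})+T\alpha+T\alpha^2$. Thus, in direct analogy with eq. (\ref{step2}),
\begin{equation}
\sum_{t=T_{K-1}+1}^{T_K} w_{P,t}(l_{P,t}-\epsilon) \;\leq\; (1-\epsilon)\,2\sqrt{2^K\bigl(\log(1/w_{D,1})+T\alpha+T\alpha^2\bigr)}. \nonumber
\end{equation}

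Summing over $K=1,\dots,K^*$ and factoring out $\sqrt{2^{K^*}}$ produces the geometric series $\sum_{K=1}^{K^*}\sqrt{2^{K-K^*}}\leq \sqrt{2}/(\sqrt{2}-1)$, exactly as in eq. (\ref{a4}). By the definition of $K^*$ we have $2^{K^*}\leq 2V^*$, giving
\begin{equation}
L_{P,T}^* - \epsilon T^* \;\leq\; (1-\epsilon)\,\frac{2\sqrt{2V^*}}{\sqrt{2}-1}\,\sqrt{\log(1/w_{D,1})+T\alpha+T\alpha^2}, \nonumber
\end{equation}
and dividing both sides by $T^*$ yields the claimed inequality.

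The main bookkeeping obstacle, and the only place where the weight-shifting affects the argument, is the use of Corollary 4.1.2 inside each epoch: its $\Delta_T$ factor depends on the \emph{full} horizon and thus the term $T\alpha+T\alpha^2$ is shared across all epochs rather than absorbed into the geometric series. Once one recognizes that this additive term can simply be pulled outside the square root, the rest of the argument is identical to Theorem 3.6 and introduces no new constants beyond $\sqrt{2}/(\sqrt{2}-1)$.
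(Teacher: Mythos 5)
Your proposal follows exactly the route the paper intends: the paper's entire justification for this corollary is the sentence ``by following the same steps we used in the proof of Theorem 3.6,'' and your epoch decomposition as in eq. (\ref{step1}), the per-epoch application of Corollary 4.1.2 with $\log\left(1/\left(w_{D,1}\Delta_T\right)\right)\leq \log\left(1/w_{D,1}\right)+T\alpha+T\alpha^2$ pulled outside as a horizon-wide additive term, and the geometric-series summation are precisely those steps. The one point to fix is bookkeeping in the last line: after the analogue of eq. (\ref{a4}) you have $\left(1-\epsilon\right)\frac{2\sqrt{2}}{\sqrt{2}-1}\sqrt{2^{K^*}\left(\log\left(1/w_{D,1}\right)+T\alpha+T\alpha^2\right)}$, and substituting your stated inequality $2^{K^*}\leq 2V^*$ yields the constant $\frac{4}{\sqrt{2}-1}\sqrt{V^*}$, which is a factor of $\sqrt{2}$ larger than the claimed $\frac{2\sqrt{2}}{\sqrt{2}-1}\sqrt{V^*}=\frac{2\sqrt{2V^*}}{\sqrt{2}-1}$; so your final display does not follow from the line preceding it. To land on the stated constant one needs $2^{K^*}\leq V^*$, which is what the paper tacitly invokes at eq. (\ref{b4}), even though the definition of $K^*$ (namely $T>T_{K^*-1}$, hence $V^*>2^{K^*-1}$) only delivers $2^{K^*}<2V^*$ --- in other words, the $\sqrt{2}$ you are missing is a looseness already present in the paper's own Theorem 3.6, not a structural flaw in your argument, but you should either justify $2^{K^*}\leq V^*$ or accept the slightly worse constant.
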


Note that for $\alpha = 0$, Algorithm 5 reduces to the original SafePredict (Alg. 3). As we increase  $\alpha$, i.e. the adaptivity,  the efficiency increases, since we increase the likelihood to make a prediction at each step. Furthermore, validity is guaranteed as long as $\alpha$ is on the order of $1/T$. In the next section, the impact of $\alpha$ is numerically evaluated.

\section{Experiments}

In this section, we investigate the performance of the proposed meta-algorithms on both synthetic and real  data.\footnote{For the sake of reproducibility, Python scripts used to generate our results, tabulated results on the synthetic data, and the experiments on other data sets are provided in the supplementary material.}
 In Section 5.1, we randomly generate loss sequences and verify the validity of our algorithms empirically for various loss statistics and various degrees of adaptivity. The experiments show that the Weight-Shifting SafePredict boosts the number of predictions in changing environments while preserving the validity of the algorithm.

In Section 5.2, we compare SafePredict with popular confidence-based refusal methods on the well-known MNIST digit recognition dataset \cite{lecun-mnisthandwrittendigit-2010}. SafePredict increases the efficiency relative to other refusal mechanisms while guaranteeing validity.

In the following, for the sake of brevity, we refer Weight-Shifting SafePredict (Alg. 5) as simply SafePredict by noting that $\alpha=0$ corresponds to the original SafePredict (Alg. 3).

\subsection{Synthetic Data}
In this subsection, we generate a binary sequence of loss-values with varying error probabilities. The subsection examines the validity and efficiency of SafePredict. In particular, we restrict the adaptivity parameter of the SafePredict to have the form $\alpha = k/T$, in accordance with Corollary 4.1.2, and observe the trade-off in choosing the parameter $k$. 

\begin{figure}[!h] 
\centering
\includegraphics[width=\linewidth]{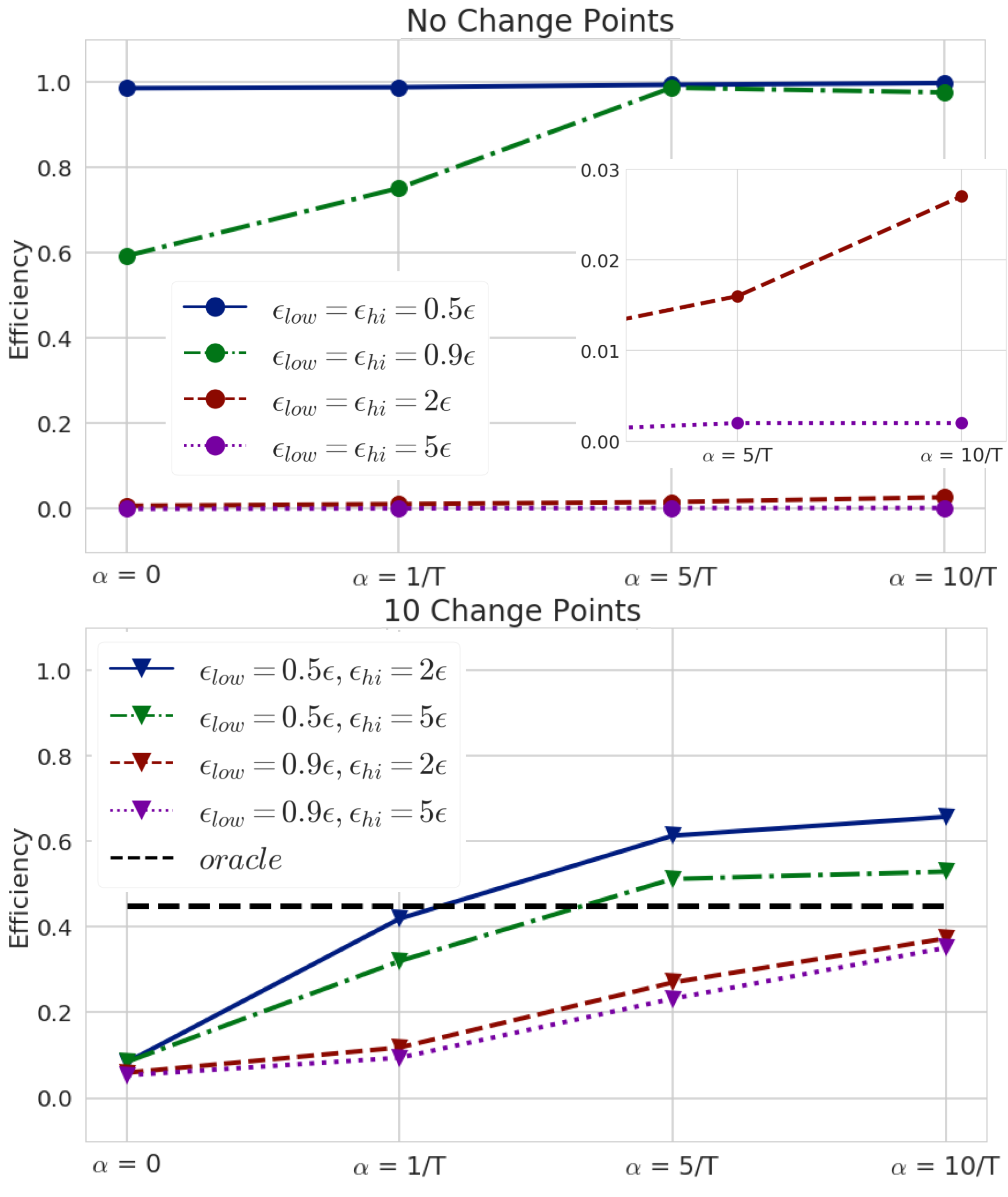}
\caption{\textit{Efficiency Experiments on Synthetic Data: The efficiency ($T^*/T$) of SafePredict with respect to increasing choices of $\alpha$. (top) If the base predictor has a constant error rate which is higher than the target, SafePredict almost always refuses. The number of predictions in this case increases with $\alpha$. (bottom) On the other hand, when the error rate of the base predictor fluctuates around the target, the efficiency of SafePredict increases as $\alpha$ increases and achieves nearly the same efficiency as the oracle, which predicts if only if $\epsilon_t \leq \epsilon$. No matter what, asymptotic validity is preserved.
}}\label{fig_2}
\end{figure}

\textbf{Parameters:} We fix the time horizon $T = 50000$, initial weight $w_{P,1} = 0.5$ and the target error rate $\epsilon = 0.05$. Then we evaluate our results  for $\alpha \in \{ 0, 1/T, 5/T, 10/T\}$. 

\textbf{Data Generation:} 
To evaluate the performance of the meta-algorithm, we assume the existence of a base predictor $P$ with a time varying error rate, and generate the loss sequence corresponding to its predictions randomly. To model the changes in the error-rate we employ a simple change-point model. 
The statistical properties of the generated loss sequence $l_{P,1}, \ldots, l_{P,T}$ are characterized by the following three parameters: low error level ($\epsilon_{low}$), high error level ($\epsilon_{hi}$), and the number of change points ($numChange$). 
To generate a particular loss sequence, we first split the time horizon into $numChange+1$ non-overlapping, consecutive, equal length blocks. Then we assume the error rate of $P$ to be constant within each block and alternates between $\epsilon_{low}$ and $\epsilon_{hi}$ for consecutive blocks. Formally, we generate each $l_{P,t}$ as an independent Bernoulli random variable as follows:
\[l_{P,t} =   \left\{
\begin{array}{ll}
      1 & ~~~\textrm{with prob. }\epsilon_{t} \\
      0 & ~~~\textrm{with prob. }1-\epsilon_{t} \\
\end{array} 
\right. ,\]
where 
\[\epsilon_{t} =   \left\{
\begin{array}{ll}
      \epsilon_{low} & ~~~\textrm{if } \left\lceil {t(numChange+1)}/{T} \right\rceil \textrm{ is even} \\
      \epsilon_{hi} & ~~~\textrm{otherwise} \\
\end{array} 
\right. .\]

\textbf{Results:} We generate 12 distinct loss sequences with different $numChange$, $\epsilon_{low}$ and $\epsilon_{hi}$ values and evaluate the error rate and efficiency of Algorithm 5 for various values of $\alpha$. The complete numerical results are presented in the supplementary material, but the key observations about the efficiency of SafePredict are summarized in Figures $2$ and $3$.

As a baseline for comparison, the results of an idealized oracle are also included. We assume the oracle has access to the true error rate of $P$, i.e. $\epsilon_t$, at each time point and decides to predict if and only if $\epsilon_t \leq \epsilon$. In other words, its prediction probability is equal to
\[w_{P,t} =   \left\{
\begin{array}{ll}
      1 & ~~~\textrm{if } \epsilon_t \leq \epsilon \\
      0 & ~~~\textrm{otherwise} \\
\end{array} 
\right. .\]

By contrast, in all our experiments, SafePredict does not know the number or location in time  of the change points. So the oracle enjoys a significant advantage. 

\begin{figure}[!htb] 
\centering
\includegraphics[width=\linewidth]{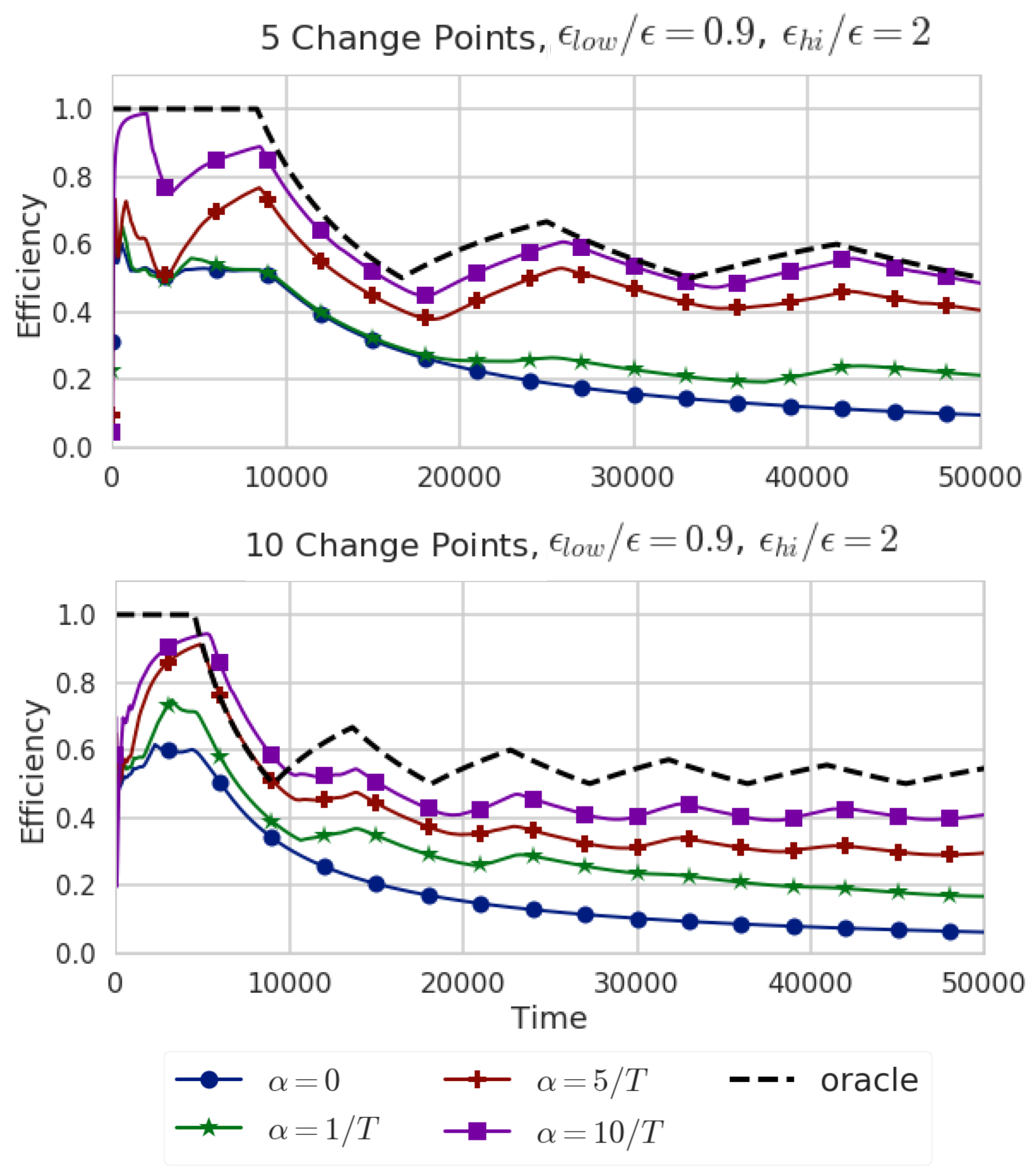}
\caption{\textit{Synthetic Data, Evolution of Efficiency:
Note $\alpha = 0$ corresponds to the original SafePredict (Alg. 3) and has no adaptivity.  For $\alpha>0$, SafePredict can track the change points and boost efficiency. Larger $\alpha$ implies better tracking. 
As the number of change points increases, SafePredict does a poorer job tracking the performance of the base predictor (relative to the oracle that knows the error rate), thus the efficiency drops. All the predictors in the figures are valid.}  
}
\label{fig_3}
\end{figure}

These simulations reveal two main issues:

\textit{1) Bound on Validity:} Following Corollary 4.1.2, for all $\alpha = O\left(1/T\right)$, the excess error rate is $O\left( {\sqrt{V^*}}/{T^*}\right)$. However, the constants hidden by the big-oh notation increase with $\alpha$, and become significant when $T^*$ is small. This effect can be observed most prominently in  experiments where the error rate of the base algorithm is consistently higher than the target rate. In these cases, the oracle always refuses as it should whereas SafePredict refuses often but not always. Asymptotically, SafePredict is still valid, but for finite sequences its error rate may exceed the target.  
On the other hand, in the experiments where the base predictor achieves the target for significant periods of time, the excess error rate of SafePredict stays within $7\%$ of the target error rate (below $0.0035$ for $\epsilon = 0.05$) and the efficiency increases with $\alpha$, see Table 1 in Supplementary Material.

\textit{2) Efficiency via adaptivity:} As expected from the theoretical analysis in Section 4 and empirically observed in Figure 2, the efficiency of SafePredict increases with $\alpha$. SafePredict performs nearly as well as the oracle, even though the oracle knows the true error probabilities and SafePredict does not. However, as the number of change points increases, SafePredict must refuse more to be able to adapt to the changes, and therefore suffers a drop in  efficiency. As can be seen in Figure 3, the efficiency of SafePredict decreases with the number of change points while the tracking ability of SafePredict increases with $\alpha$, sometimes approaching the efficiency achieved by the oracle.

\subsection{Real Data: MNIST Dataset}
We now explore the validity and efficiency of SafePredict on the MNIST digit recognition dataset \cite{lecun-mnisthandwrittendigit-2010} with a random forest classifier as the base predictor. Results  on other datasets from UCI repository \cite{Lichman:2013} are presented in Supplementary material. We also compare SafePredict with a natural confidence-based refusal mechanism.  
This method is widely used in practice, e.g.  \cite{zhang2013vehicle,hanczar2008classification,de2000reject}, and similar methods are used as baselines in the literature, see e.g. \cite{cortes2017line,bartlett2008classification}. Furthermore, one can conveniently make a fair comparison with SafePredict since both are meta-algorithms that can be used on top of (almost) any predictor.
Finally, we investigate a smart way of combining SafePredict with the confidence-based mechanism to improve the efficiency.

The MNIST dataset consists of $70000$ samples of handwritten digits where each sample is represented by a $784$ dimensional integer vector and labeled with a digit from $0$ to $9$. We randomly permute the data and choose the first $10000$ data points to use in our experiments. 
An artificial change-point is introduced at $t=5000$. We then choose a random label permutation and apply this permutation function to the last $5000$ data points, i.e. the second half of the data points are effectively chosen from another distribution. We fix the target error rate as $\epsilon = 0.08$ in our experiments.
 
Random forests are chosen due to their outstanding performance on a very broad set of tasks \cite{fernandez2014we} and robustness to the choice of parameters. We used a Python implementation of Random Forests from the Scikit-learn package \cite{pedregosa2011scikit} with default parameters. In the scope of this experiment, we retrain the random forest once every $100$ new data points in order to mitigate the computational burden.

The confidence-based refusal method we consider in this paper starts with a base predictor that outputs a confidence score for each prediction, and a refusal threshold. The meta-algorithm decides to refuse if the confidence score does not exceed the threshold value. In our experiments, we computed confidence scores via \textit{predict_proba} method for each prediction. Implementation details are available in the documentation of scikit-learn.
As in the case of retraining the base predictors, we update the refusal threshold once every $100$ data points. In particular, to update the threshold at time $t$ we use cross-validation over the points up to and including $t-1$ and choose the smallest threshold (i.e. the one refuses the least) that gives an error rate  smaller than $\epsilon$ over the non-refused predictions in the validation sets.

We show the results with SafePredict meta-algorithm (Alg. 5) with fixed parameters $\alpha = 1/1000$ ($10/T$) and $w_{P,1} = 0.5$. SafePredict is used on top of random forests either by itself or in conjunction with the confidence-based refusal meta-algorithm. In the latter multi-meta-algorithm scenario, SafePredict use the losses suffered by the confidence-based algorithm as an input. When the confidence-based meta-algorithm refuses to make a prediction for data point at time $t$, SafePredict also refuses to predict and does not update the weights, i.e. ignores the data point at time $t$. 

Finally, we include an ``amnesic adaptive" version of the combined meta-algorithm and base predictor that considers excessive refusals of SafePredict as a sign that adaptation is needed. To do so, the amnesic adaptive version monitors the fraction of the predictions made by the base predictor that are refused by SafePredict within the last epoch (i.e. $100$ data points). If the fraction is larger than $0.5$ the amnesic adaptive version ignores all data points from before the current epoch while both (i) training the base predictor (e.g. random forest) and (ii) any underlying confidence-based meta-algorithm. The evaluation of efficiency ($T^*/T$) and error rate ($L_{P,T}^*/T^*$) versus time is plotted for each of these predictors in Fig. 4.

\begin{figure}[!htbp]
\centering
\includegraphics[width=\linewidth]{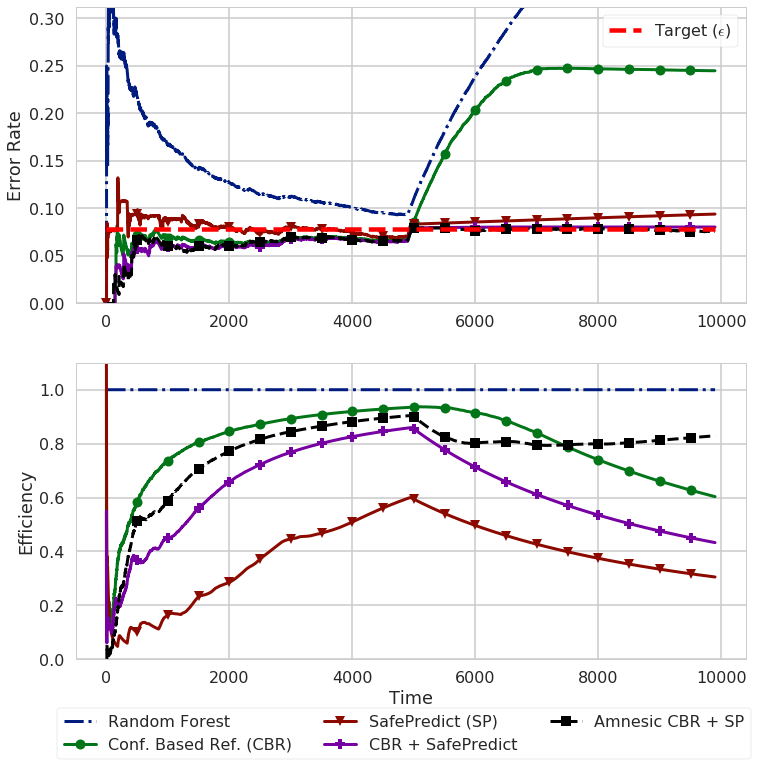}
\caption{\textit{MNIST Dataset: Efficiency is 1.0 for the base predictor but lower for the various refusing meta-algorithms. Validity is measured as a fraction of the target error rate. So the base predictor has a poor error rate (way over $\epsilon$). All the SafePredict variants rapidly approach a normalized error rate value of 1 though the error rate increases at the change point at time t = 5000. The confidence based competition cannot guarantee  asymptotic validity. Two forms of adaptivity help reduce the number of refusals: weight-shifting especially with a high $\alpha$ value and amnesic adaptivity. Combining both leads to the highest efficiency while preserving validity. }
}
\end{figure}

Experimental results lead to the following observations:

\textit{1) Validity:} As seen from the top subplot of Figure 4, the confidence-based refusal mechanism fails to satisfy the validity requirement of keeping the error rate below $\epsilon$ after the change point, i.e., $t>5000$. The reason is that confidence-base refusal requires data points to be (at least approximately) exchangeable to deliver the required error guarantee and this assumption fails after the change point. 
On the other hand, SafePredict establishes validity because it makes no assumptions about data points. 

\textit{2) Efficiency: } 
The robust validity of SafePredict relative to changes in the environment comes at a cost to efficiency. Generally, the confidence-based refusal method has a higher efficiency than SafePredict due to the nature of the refusal strategies (which follow from the stronger assumptions of the confidence-based methods). 
However, the discrepancy in the efficiency can be mitigated by employing the confidence-based mechanism as the base algorithm for SafePredict. This method implies a two-layered refusal mechanism, but as we discussed in Section 3.2 and
4.2, the second layer (SafePredict) will refuse seldom as long as the first layer (confidence-based algorithm) stays valid. As can be seen from Figure 4, this method (CBR+SafePredict) combines the best of confidence-based refusals and SafePredict by performing almost as efficiently as confidence-based refusals before the change point and preserving  validity throughout.

\textit{3) Amnesic Adaptivity: }  SafePredict on top of the confidence-based predictor remains valid throughout, but refusals increase after the change point, since the confidence-based mechanism is not valid anymore and thus causes excessive errors. In the amnesic adaptive variant, we use the excessive refusals to trigger an update of the base algorithm. Specifically, if the number of data points predicted by confidence-based predictor but refused by SafePredict is large, the amnesic approach concludes that the confidence-based algorithm is no longer well-calibrated, so earlier data points should be ignored. We denote this adaptive method as ``Amnesic CBR+SP''. As seen in the plots, Amnesic CBR+SP gives the most favorable performance in our experiments by preserving validity thanks to SafePredict and achieving better efficiency after the change point by forcing the confidence-based algorithm to forget the previous data points.

\section{Conclusion}

We have introduced a meta-algorithm, SafePredict, that works with any base prediction algorithm and asymptotically guarantees an upper bound on the error rate for non-refused predictions.  The error guarantee achieved by SafePredict does not depend on any assumption on the data or the base prediction algorithm. To achieve this, we refined the regret notion from the expert advice framework and recast the exponentially weighted average forecasting algorithm to be used as a method to manage refusals. 

To avoid too many refusals in changing environments, we  introduced a weight-shifting heuristic that encourages predictions when  the quality of the base predictor improves. We have also used an amnesic adaptation mechanism to further improve versatility in the face of occasional change points.  Our experiments show that these methods establish validity even in challenging environments while refusing seldom when the base predictor does well.

\ifCLASSOPTIONcompsoc
  \section*{Acknowledgments}
\else
  \section*{Acknowledgment}
\fi
Work supported in part by NYU Seed Grant, NYU WIRELESS, and the National Science Foundation grants CNS-1302336, MCB-1158273, IOS-1339362, MCB-1412232.

\ifCLASSOPTIONcaptionsoff
  \newpage
\fi

\bibliographystyle{IEEEtran}

\begin{thebibliography}{10}
\providecommand{\url}[1]{#1}
\csname url@samestyle\endcsname
\providecommand{\newblock}{\relax}
\providecommand{\bibinfo}[2]{#2}
\providecommand{\BIBentrySTDinterwordspacing}{\spaceskip=0pt\relax}
\providecommand{\BIBentryALTinterwordstretchfactor}{4}
\providecommand{\BIBentryALTinterwordspacing}{\spaceskip=\fontdimen2\font plus
\BIBentryALTinterwordstretchfactor\fontdimen3\font minus
  \fontdimen4\font\relax}
\providecommand{\BIBforeignlanguage}[2]{{%
\expandafter\ifx\csname l@#1\endcsname\relax
\typeout{** WARNING: IEEEtran.bst: No hyphenation pattern has been}%
\typeout{** loaded for the language `#1'. Using the pattern for}%
\typeout{** the default language instead.}%
\else
\language=\csname l@#1\endcsname
\fi
#2}}
\providecommand{\BIBdecl}{\relax}
\BIBdecl

\bibitem{siegel2013predictive}
E.~Siegel, \emph{Predictive {A}nalytics: {T}he {P}ower to {P}redict {W}ho will
  {C}lick, {B}uy, {L}ie, or {D}ie}.\hskip 1em plus 0.5em minus 0.4em\relax John
  Wiley \& Sons, 2013.

\bibitem{dua2014machine}
S.~Dua, U.~R. Acharya, and P.~Dua, \emph{Machine {L}earning in {H}ealthcare
  {I}nformatics}.\hskip 1em plus 0.5em minus 0.4em\relax Springer, 2014.

\bibitem{han2015building}
B.~Han, ``Building a {B}etter {D}isease {D}etective,'' \emph{IEEE Spectr.},
  vol.~52, no.~10, pp. 46--51, 2015.

\bibitem{harbert2013law}
T.~Harbert, ``The {L}aw {M}achine,'' \emph{IEEE Spectr.}, vol.~50, no.~11, pp.
  31--54, 2013.

\bibitem{mit_tech}
T.~Simonite, ``How to {U}pgrade {J}udges with {M}achine {L}earning.'' MIT
  Technology Review, Mar 2017.

\bibitem{cesa2006prediction}
N.~Cesa-Bianchi and G.~Lugosi, \emph{Prediction, {L}earning, and
  {G}ames}.\hskip 1em plus 0.5em minus 0.4em\relax Cambridge University Press,
  2006.

\bibitem{chow1970optimum}
C.~Chow, ``On {O}ptimum {R}ecognition {E}rror and {R}eject {T}rade-off,''
  \emph{IEEE Trans. Inf. Theory}, vol.~16, no.~1, pp. 41--46, 1970.

\bibitem{friedman1997bias}
J.~H. Friedman, ``On {B}ias, {V}ariance, 0/1-{L}oss, and the
  {C}urse-of-{D}imensionality,'' \emph{Data Mining and Knowledge Discovery},
  vol.~1, no.~1, pp. 55--77, 1997.

\bibitem{hellman1970nearest}
M.~E. Hellman, ``The {N}earest {N}eighbor {C}lassification {R}ule with a
  {R}eject {O}ption,'' \emph{IEEE Trans. on Systems Science and Cybern.},
  vol.~6, no.~3, pp. 179--185, 1970.

\bibitem{landgrebe2006interaction}
T.~C. Landgrebe, D.~M. Tax, P.~Pacl{\'\i}k, and R.~P. Duin, ``The {I}nteraction
  {B}etween {C}lassification and {R}eject {P}erformance for {D}istance-{B}ased
  {R}eject-{O}ption {C}lassifiers,'' \emph{Pattern Recognition Lett.}, vol.~27,
  no.~8, pp. 908--917, 2006.

\bibitem{de2000reject}
C.~De~Stefano, C.~Sansone, and M.~Vento, ``To {R}eject or {N}ot to {R}eject:
  {T}hat is the {Q}uestion-an {A}nswer in {C}ase of {N}eural {C}lassifiers,''
  \emph{IEEE Trans. Syst. Man Cybern. C, Appl. Rev.}, vol.~30, no.~1, pp.
  84--94, 2000.

\bibitem{li2006confidence}
M.~Li and I.~K. Sethi, ``Confidence-based {C}lassifier {D}esign,''
  \emph{Pattern Recognition}, vol.~39, no.~7, pp. 1230--1240, 2006.

\bibitem{scheirer2014probability}
W.~J. Scheirer, L.~P. Jain, and T.~E. Boult, ``Probability {M}odels for {O}pen
  {S}et {R}ecognition,'' \emph{IEEE Trans. Pattern Anal. Mach. Intell.},
  vol.~36, no.~11, pp. 2317--2324, 2014.

\bibitem{golfarelli1997error}
M.~Golfarelli, D.~Maio, and D.~Malton, ``On the {E}rror-{R}reject {T}rade-off
  in {B}iometric {V}erification {S}ystems,'' \emph{IEEE Trans. Pattern Anal.
  Mach. Intell.}, vol.~19, no.~7, pp. 786--796, 1997.

\bibitem{fumera2003classification}
G.~Fumera, I.~Pillai, and F.~Roli, ``Classification with {R}eject {O}ption in
  {T}ext {C}ategorisation {S}ystems,'' in \emph{12th Int. Conf. on Image Anal.
  and Process.}\hskip 1em plus 0.5em minus 0.4em\relax IEEE, 2003, pp.
  582--587.

\bibitem{campi2010classification}
M.~C. Campi, ``Classification with {G}uaranteed {P}robability of {E}rror,''
  \emph{Mach. Learning}, vol.~80, no.~1, pp. 63--84, 2010.

\bibitem{bartlett2008classification}
P.~L. Bartlett and M.~H. Wegkamp, ``Classification with a {R}eject {O}ption
  {U}sing a {H}inge {L}oss,'' \emph{J. of Mach. Learning Research}, vol.~9, pp.
  1823--1840, Aug, 2008.

\bibitem{herbei2006classification}
R.~Herbei and M.~H. Wegkamp, ``Classification with {R}eject {O}ption,''
  \emph{Canadian J. of Stat.}, vol.~34, no.~4, pp. 709--721, 2006.

\bibitem{yuan2010classification}
M.~Yuan and M.~Wegkamp, ``Classification {M}ethods with {R}eject {O}ption
  {B}ased on {C}onvex {R}isk {M}inimization,'' \emph{J. of Mach. Learning
  Research}, vol.~11, no. Jan, pp. 111--130, 2010.

\bibitem{el2010foundations}
R.~El-Yaniv and Y.~Wiener, ``On the {F}oundations of {N}oise-{F}ree {S}elective
  {C}lassification,'' \emph{J. of Mach. Learning Research}, vol.~11, no. May,
  pp. 1605--1641, 2010.

\bibitem{wiener2011agnostic}
Y.~Wiener and R.~El-Yaniv, ``Agnostic {S}elective {C}lassification,'' in
  \emph{Advances in Neural Inform. Process. Syst.}, 2011, pp. 1665--1673.

\bibitem{cortes2016learning}
C.~Cortes, G.~DeSalvo, and M.~Mohri, ``Learning with {R}ejection,'' in
  \emph{Int. Conf. on Algorithmic Learning Theory}.\hskip 1em plus 0.5em minus
  0.4em\relax Springer, 2016, pp. 67--82.

\bibitem{kalai2012reliable}
A.~T. Kalai, V.~Kanade, and Y.~Mansour, ``Reliable {A}gnostic {L}earning,''
  \emph{J. of Comput. and Syst. Sci.}, vol.~78, no.~5, pp. 1481--1495, 2012.

\bibitem{wiener2013theoretical}
Y.~Wiener, \emph{Theoretical {F}oundations of {S}elective {P}rediction}, ser.
  PhD dissertation.\hskip 1em plus 0.5em minus 0.4em\relax Technion-Israel
  Inst. of Technology, Faculty of Comput. Sci., 2013.

\bibitem{zhang2017reject}
C.~Zhang, W.~Wang, and X.~Qiao, ``On {R}eject and {R}efine {O}ptions in
  {M}ulticategory {C}lassification,'' \emph{J. of the Amer. Statistical
  Assoc.}, 2017.

\bibitem{vovk2005algorithmic}
V.~Vovk, A.~Gammerman, and G.~Shafer, \emph{Algorithmic {L}earning in a
  {R}andom {W}orld}.\hskip 1em plus 0.5em minus 0.4em\relax Springer Sci. \&
  Bus. Media, 2005.

\bibitem{denis2015consistency}
C.~Denis and M.~Hebiri, ``Consistency of {P}lug-in {C}onfidence {S}ets for
  {C}lassification in {S}emi-supervised {L}earning,'' \emph{arXiv preprint
  arXiv:1507.07235}, 2015.

\bibitem{kocak2016conjugate}
M.~A. Kocak, E.~Erkip, and D.~E. Shasha, ``Conjugate {C}onformal {P}rediction
  for {O}nline {B}inary {C}lassification,'' in \emph{32nd Conf. on Uncertainty
  in Artificial Intell.}, 2016, pp. 347--356.

\bibitem{lei2014classification}
J.~Lei \emph{et~al.}, ``Classification with {C}onfidence,'' \emph{Biometrika},
  vol. 101, no.~4, pp. 755--769, 2014.

\bibitem{li2008knows}
L.~Li, M.~L. Littman, and T.~J. Walsh, ``Knows {W}hat it {K}nows: a {F}ramework
  for {S}elf-{A}ware {L}earning,'' in \emph{25th Int. Conf. on Mach.
  Learning}.\hskip 1em plus 0.5em minus 0.4em\relax ACM, 2008, pp. 568--575.

\bibitem{sayedi2010trading}
A.~Sayedi, M.~Zadimoghaddam, and A.~Blum, ``Trading off {M}istakes and
  {D}on't-{K}now {P}redictions,'' in \emph{Advances in Neural Inform. Process.
  Syst.}, 2010, pp. 2092--2100.

\bibitem{zhang2016extended}
C.~Zhang and K.~Chaudhuri, ``The {E}xtended {L}ittlestone's {D}imension for
  {L}earning with {M}istakes and {A}bstentions,'' \emph{arXiv preprint
  arXiv:1604.06162}, 2016.

\bibitem{littlestone1989weighted}
N.~Littlestone and M.~K. Warmuth, ``The {W}eighted {M}ajority {A}lgorithm,'' in
  \emph{30th Annu. Symp. on Found. of Comput. Sci.}\hskip 1em plus 0.5em minus
  0.4em\relax IEEE, 1989, pp. 256--261.

\bibitem{vavock1990aggregating}
V.~Vovk, ``Aggregating {S}trategies,'' in \emph{Conf. on Computational Learning
  Theory}, 1990.

\bibitem{tukey1986sunset}
J.~W. Tukey, ``Sunset {S}alvo,'' \emph{The Amer. Statistician}, vol.~40, no.~1,
  pp. 72--76, 1986.

\bibitem{mohri2012foundations}
M.~Mohri, A.~Rostamizadeh, and A.~Talwalkar, \emph{Foundations of {M}achine
  {L}earning}.\hskip 1em plus 0.5em minus 0.4em\relax MIT Press, 2012.

\bibitem{de2014follow}
S.~De~Rooij, T.~Van~Erven, P.~D. Gr{\"u}nwald, and W.~M. Koolen, ``Follow the
  {L}eader if {Y}ou can, {H}edge if {Y}ou must.'' \emph{J. of Mach. Learning
  Research}, vol.~15, no.~1, pp. 1281--1316, 2014.

\bibitem{krickeberg1965probability}
K.~Krickeberg, \emph{Probability {T}heory}, ser. Adiwes Int. Series.\hskip 1em
  plus 0.5em minus 0.4em\relax Addison-Wesley Publishing Company, 1965.

\bibitem{herbster1998tracking}
M.~Herbster and M.~K. Warmuth, ``Tracking the {B}est {E}xpert,'' \emph{Mach.
  Learning}, vol.~32, no.~2, pp. 151--178, 1998.

\bibitem{adamskiy2012closer}
D.~Adamskiy, W.~M. Koolen, A.~Chernov, and V.~Vovk, ``A {C}loser {L}ook at
  {A}daptive {R}egret,'' in \emph{Int. Conf. on Algorithmic Learning
  Theory}.\hskip 1em plus 0.5em minus 0.4em\relax Springer, 2012, pp. 290--304.

\bibitem{lecun-mnisthandwrittendigit-2010}
\BIBentryALTinterwordspacing
Y.~LeCun and C.~Cortes, ``{MNIST} {H}andwritten {D}igit {D}atabase,'' 2010.
  [Online]. Available: \url{http://yann.lecun.com/exdb/mnist/}
\BIBentrySTDinterwordspacing

\bibitem{Lichman:2013}
\BIBentryALTinterwordspacing
M.~Lichman, ``{UCI} {M}achine {L}earning {R}epository,'' 2013. [Online].
  Available: \url{http://archive.ics.uci.edu/ml}
\BIBentrySTDinterwordspacing

\bibitem{zhang2013vehicle}
B.~Zhang, Y.~Zhou, and H.~Pan, ``Vehicle {C}lassification with {C}onfidence by
  {C}lassified {V}ector {Q}uantization,'' \emph{IEEE Intell. Transp. Syst.
  Mag.}, vol.~5, no.~3, pp. 8--20, 2013.

\bibitem{hanczar2008classification}
B.~Hanczar and E.~R. Dougherty, ``Classification with {R}eject {O}ption in
  {G}ene {E}xpression {D}ata,'' \emph{Bioinformatics}, vol.~24, no.~17, pp.
  1889--1895, 2008.

\bibitem{cortes2017line}
C.~Cortes, G.~DeSalvo, M.~Mohri, and S.~Yang, ``On-line {L}earning with
  {A}bstention,'' \emph{arXiv preprint arXiv:1703.03478}, 2017.

\bibitem{fernandez2014we}
M.~Fern{\'a}ndez-Delgado, E.~Cernadas, S.~Barro, and D.~Amorim, ``Do {W}e
  {N}eed {H}undreds of {C}lassifiers to {S}olve {R}eal {W}orld {C}lassification
  {P}roblems,'' \emph{J. of Mach. Learning Research}, vol.~15, no.~1, pp.
  3133--3181, 2014.

\bibitem{pedregosa2011scikit}
F.~Pedregosa, G.~Varoquaux, A.~Gramfort, V.~Michel, B.~Thirion, O.~Grisel,
  M.~Blondel, P.~Prettenhofer, R.~Weiss, V.~Dubourg \emph{et~al.},
  ``Scikit-learn: {M}achine {L}earning in {P}ython,'' \emph{J. of Mach.
  Learning Research}, vol.~12, pp. 2825--2830, Oct, 2011.

\end{thebibliography}

 \begin{IEEEbiography}[{\includegraphics[width=25mm,height=32mm,clip,keepaspectratio]{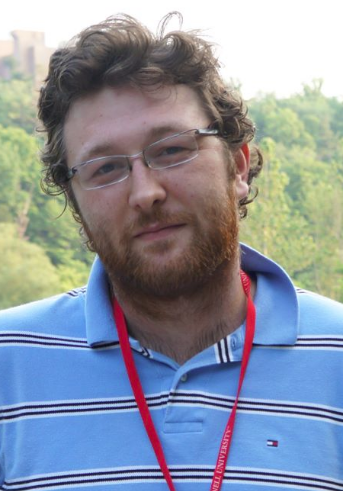}}]{Mustafa A. Kocak}
is a Ph.D. candidate at NYU Tandon School of Engineering, with ECE department. His research interests include statistical learning theory, information theory and wireless communications. Kocak
received a B.Sc. in electrical engineering  from Bilkent University, Ankara,Turkey. He is a student member of IEEE. Contact him at kocak@nyu.edu.
 \end{IEEEbiography}

\begin{IEEEbiography}[{\includegraphics[width=25mm,height=32mm,clip,keepaspectratio]{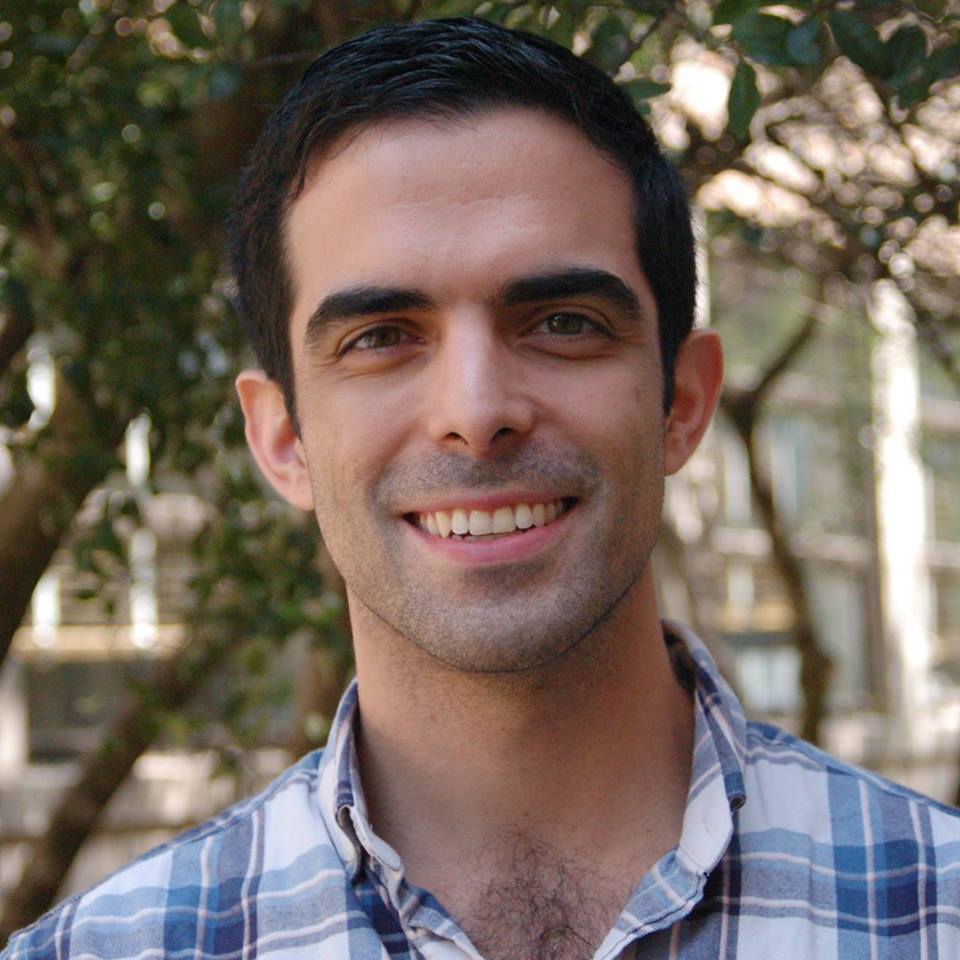}}]{David Ramirez} 
received a B.S. with honors in Engineering Physics from Tecnologico de Monterrey (ITESM), and M.S. and Ph.D. degrees in Electrical and Computer Engineering from Rice University. He is currently a Postdoctoral Researcher at New York University and a Visiting Postdoctoral Researcher at Princeton University. His research interests are in wireless networks, communication theory, \& optimization. 
\end{IEEEbiography}

\begin{IEEEbiography}[{\includegraphics[width=25mm,height=32mm,clip,keepaspectratio]{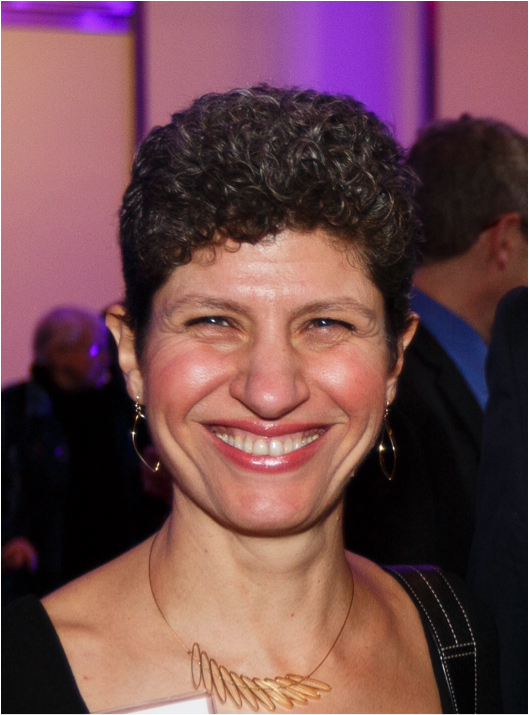}}]{Elza Erkip} received the B.S. degree in Electrical and Electronics Engineering from Middle East Technical University, Ankara, Turkey, and the M.S. and Ph.D. degrees in Electrical Engineering from Stanford University, Stanford, CA, USA. 
Currently, she is a Professor of Electrical and Computer Engineering with New York University Tandon School of Engineering, Brooklyn, NY, USA. 
Her research interests are in information theory, communication theory, and wireless communications.

Dr. Erkip is a member of the Science Academy Society of Turkey and is among the 2014 and 2015 Thomson Reuters Highly Cited Researchers. She received the NSF CAREER award in 2001 and the IEEE Communications Society WICE Outstanding Achievement Award in 2016. Her paper awards include the IEEE Communications Society Stephen O. Rice Paper Prize in 2004, and the IEEE Communications Society Award for Advances in Communication in 2013. She has been a member of the Board of Governors of the IEEE Information Theory Society since 2012 where she is currently the First Vice President. She was a Distinguished Lecturer of the IEEE Information Theory Society from 2013 to 2014. 
\end{IEEEbiography}

\begin{IEEEbiography}[{\includegraphics[width=25mm,height=32mm,clip,keepaspectratio]{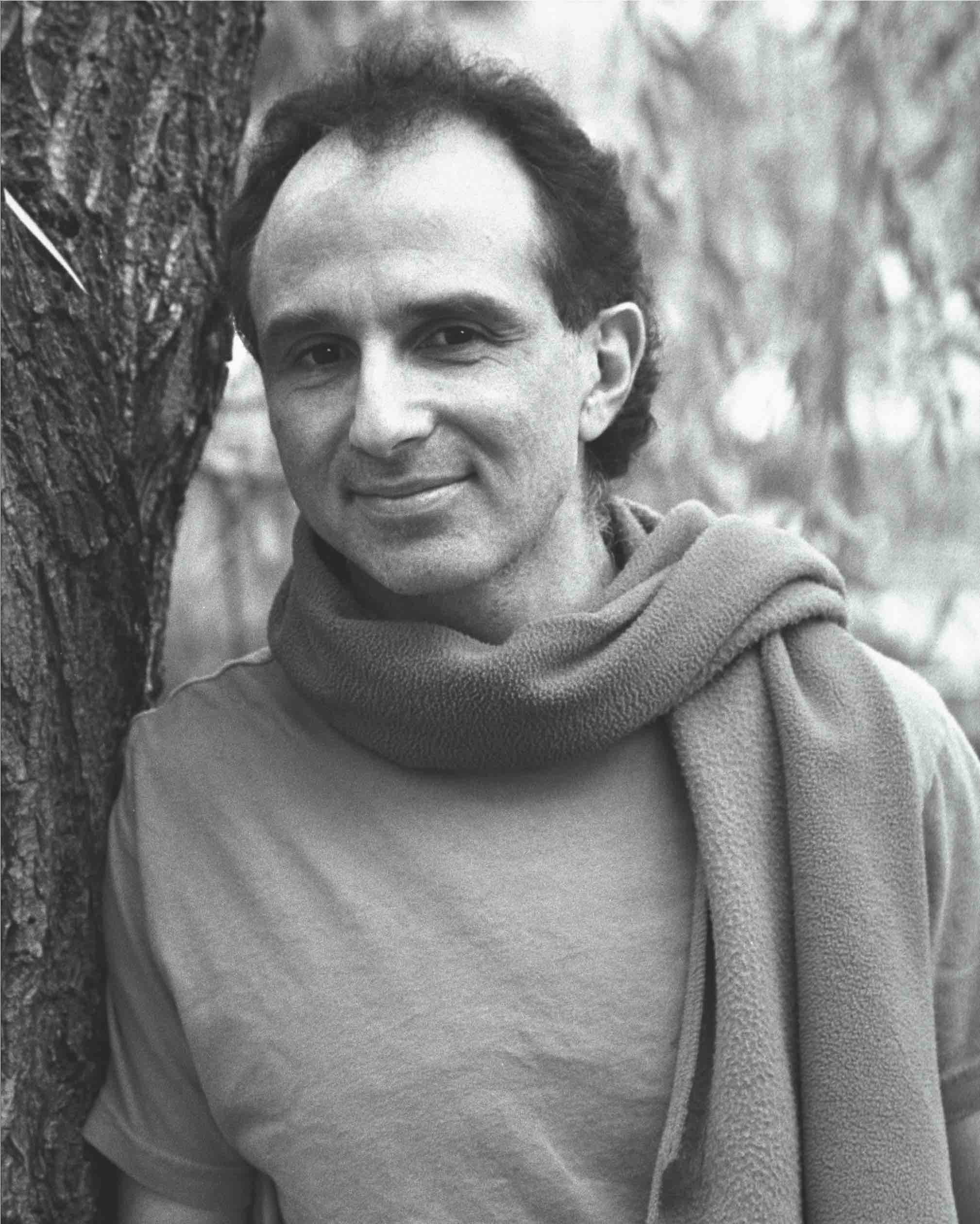}}]{Dennis E. Shasha}
 is a Professor of Computer Science at the Courant Institute of New York University. His research interests include data science, biological computing, wireless communication and concurrent data structures. Shasha 
received a PhD in applied math from Harvard University. He is an ACM Fellow and the recipient of an INRIA Internaional Chair. He is co-editor in chief of Information Systems, and is or has been the puzzle columnist for CACM and Scientific American. Contact shasha@cims.nyu.edu
\end{IEEEbiography}

\end{document}